\newtheorem{thm}{Theorem}
\newtheorem*{theorem*}{Theorem}
\newtheorem{lem}[thm]{Lemma}
\newtheorem{prop}[thm]{Proposition}
\newtheorem{defn}{Definition}
\newcommand{\eps}{\varepsilon}
\def\argmin{\mathop{\rm argmin}}
\def\Var{\mbox{Var}}
\def\E{\mbox{E}}
\def\diag{\mbox{diag}}
\def\diag{\mbox{Diag}}
\def\av{\mathbf a}
\def\bv{\mathbf b}
\def\ev{\mathbf e}
\def\sv{\mathbf s}
\def\tv{\mathbf t}
\def\uv{\mathbf u}
\def\vv{\mathbf v}
\def\wv{\mathbf w}
\def\xv{\mathbf x}
\def\yv{\mathbf y}
\def\zv{\mathbf z}
\def\Av{\mathbf A}
\def\Dv{\mathbf D}
\def\Ev{\mathbf E}
\def\Iv{\mathbf I}
\def\Lv{\mathbf L}
\def\Sv{\mathbf S}
\def\Uv{\mathbf U}
\def\Vv{\mathbf V}
\def\Wv{\mathbf W}
\def\Xv{\mathbf X}
\def\Yv{\mathbf Y}
\def\Zv{\mathbf Z}
\newcommand{\etav}{\mbox{\boldmath{$\eta$}}}
\newcommand{\Piv}{\mbox{\boldmath{$\Pi$}}}
\newcommand{\Sigmav}{\mbox{\boldmath{$\Sigma$}}}
\newcommand{\Lambdav}{\mbox{\boldmath{$\Lambda$}}}
\newcommand{\Bc}{\mathcal{B}}
\newcommand{\Fc}{\mathcal{F}}
\newcommand{\Pc}{\mathcal{P}}
\newcommand{\Sc}{\mathcal{S}}
\newcommand{\Uc}{\mathcal{U}}
\newcommand{\Vc}{\mathcal{V}}
\newcommand{\Wc}{\mathcal{W}}
\newcommand{\Xc}{\mathcal{X}}
\newcommand{\Rb}{\mathbb{R}}
\newcommand{\gr}{{\rm{Gr}}}
\newcommand{\sph}{{\rm{Sp}}}
\newcommand{\bp}{{\rm{bp}}}
\newcommand{\row}{{\rm{row}}}
\newcommand{\col}{{\rm{col}}}
\newcommand{\BP}{{\rm{BP}}}
\def\1v{\mathbf 1}
\def\0v{\mathbf 0}
\title{Robust SVD Made Easy: A fast and reliable algorithm for large-scale data analysis}
\author{Sangil Han, Kyoowon Kim, and Sungkyu Jung\\
Department of Statistics Seoul National University}
\begin{document}
\maketitle

\begin{abstract}
  The singular value decomposition (SVD) is a  crucial tool in machine learning and statistical data analysis. However, it is highly susceptible to outliers in the data matrix. Existing robust SVD algorithms often sacrifice speed for robustness or fail in the presence of only a few outliers. This study introduces an efficient algorithm, called Spherically Normalized SVD, for robust SVD approximation that is highly insensitive to outliers, computationally scalable, and provides accurate approximations of singular vectors. The proposed algorithm achieves remarkable speed by utilizing only two applications of a standard reduced-rank SVD algorithm to appropriately scaled data, significantly outperforming competing algorithms in computation times. To assess the robustness of the approximated singular vectors and their subspaces against data contamination, we introduce new notions of breakdown points for matrix-valued input, including row-wise, column-wise, and block-wise breakdown points. Theoretical and empirical analyses demonstrate that our algorithm exhibits higher breakdown points compared to standard SVD and its modifications. We empirically validate the effectiveness of our approach in applications such as robust low-rank approximation and robust principal component analysis of high-dimensional microarray datasets. Overall, our study presents a highly efficient and robust solution for SVD approximation that overcomes the limitations of existing algorithms in the presence of outliers.
\end{abstract}

\section{INTRODUCTION}\label{sec:intro}

Singular Value Decomposition (SVD) is one of the most useful tools in machine learning, used in processing image, video and natural languages, constructing recommender systems, and statistical data analysis. In particular, SVD is used for dimension reduction for downstream machine learning tasks, which often improves the overall performance of the task with reduced computational complexity. However, real-world data often contain noise, outliers, and other anomalies, and with only a contaminated data matrix at hand, standard SVD may provide undesirable low-rank decomposition. 
As a result, there is a need for robust SVD algorithms that can handle these challenges and provide accurate and robust results in the presence of data irregularities.

In this work, we introduce a new approach called Spherically Normalized SVD (SpSVD), which aims to handle outliers more effectively compared to the classical SVD and competing robust SVD algorithms including \cite{zhang2013robust,candes2011robust,brahma2017reinforced,rahmani2017coherence}. 
Inspired by a robust PCA proposal of \citet{locantore1999robust}, the SpSVD algorithm adopts the spherical normalization approach of Locantore et al. to approximate both left and right singular vectors. While the normalization gives highly robust approximations of those vectors, we additionally solve a simple optimization problem  for more accurate low-rank approximation. Our algorithm is easy to implement and extremely fast to compute. Specifically, it requires a computational complexity similar to that of classical SVD for low-rank approximations. We also establish that the algorithm provides a statistically accurate approximation of singular vectors, even in the presence of infinitesimal contamination of a considerable scale.

The robustness of SpSVD is carefully evaluated by extending the notion of breakdown point, a quantitative measure of robustness, commonly used in robust estimation literature such as \cite{tyler2023robust,tang2016robustness,huber2011robust}. The classical definition of observation-wise breakdown point is generalized to handle singular vectors (unit vectors) and the subspaces spanned by those and also for the cases where contamination occurs for rows, columns or individual elements of input matrix $\Xv$. Utilizing the generalized notion of breakdown point, we show that the singular vectors approximated by SpSVD have higher breakdown points than the classical SVD and its seemingly robust  variants \citep{gabriel1979lower,liu2003robust,ke2005robust,zhang2013robust}. To the best of our knowledge, this work is the first study investigating breakdown points specifically related to the singular vectors and their subspaces, providing novel insights into the robustness of SVD.

The accuracy, robustness, and computational times of SpSVD are empirically compared with existing robust SVD approaches including \cite{zhang2013robust,candes2011robust,brahma2017reinforced,rahmani2017coherence}, via simulated experiments. In particular, our proposal is on par with the best-performing algorithm---the Robust PCA (RPCA) approach of \citet{candes2011robust}---in terms of the accuracy and  robustness, but is up to 500 times faster (in real computation times) than RPCA, rendering its effectiveness especially for large-scale data analysis.

\paragraph{Related Works}

There have been many proposals for robust SVD or PCA (Principal Component Analysis), which can be broadly categorized into four approaches. 
{(1) Optimization for low-rank approximation}: One approach to low-rank approximation is achieved by solving optimization problems \citep{markopoulos2014optimal,barrodale1968l1,ding2019noisy}. Specifically, \citet{candes2011robust} proposed an optimization problem where a data matrix is decomposed into a low-rank approximation and a sparse outlier matrix. This approach has many variants \citep{wright2013compressive,zhou2010stable,xu2010robust}. One of them, \citet{she2016robust,brahma2017reinforced} proposed using the orthogonal complement of the low-rank approximation to capture outliers lying the orthogonal complement space. 
{(2) Minimization of element-wise loss}: Finding SVD can be recasted to a minimization problem with element-wise loss. To modify SVD to be more robust, several approaches including \cite{gabriel1979lower,liu2003robust,ke2005robust,zhang2013robust} have been proposed to replace the element-wise loss with other loss functions such as $L_1$-loss or Huber's loss function. 
{(3) Outlier filtering}: In a natural attempt to achieve robustness, many researchers have proposed outlier filtering methods \citep{xu2012outlier, diakonikolas2023nearly, rahmani2017coherence, kong2020robust,jambulapati2020robust} including robust covariance estimation by subsampling techniques. 
However, most robust covariance matrix estimations including \cite{campbell1980robust,rousseeuw1984least,rousseeuw1985multivariate,hubert2018minimum} require large sample size $(n \gg p)$, and are not generally not applicable to SVD. 
{(4) Projection pursuits}: In that the first principal component is the direction maximizing the variability of data, researchers including \cite{croux2013robust,croux2007algorithms} proposed approaches based on projection pursuit where the goal is to find the direction that maximizes the dispersity of data points for some robust dispersity measure, such as the first quantile of the pairwise differences. Most of these approaches are not applicable to large-scale data analysis, due to its high computational costs. We have numerically compared our proposal with most of the approaches above, but chose not to present the results because of either poor performances or excessively long computation times. 

Most of the aformentioned algorithms do not provide robust SVD, but robust PCA, which primarily concentrate on subspace recovery. Such methods may robustly recover right singular vectors, but do not provide robust left singular vectors and singular values. Note that there are data-analytic situations in which both the right and left singular vectors are simultaneously needed, for instance in multi-source data analysis \citep{feng2018angle,lock2013joint,prothero2022data}.

\citet{hampel1968contributions}  and \citet{huber1992robust} introduced the notion of breakdown point, a  quantitative measure of robustness in the presence of distributional contamination. They also introduced a probabilistic measure of robustness via the influence function. 
Some researchers have investigated robustness of unit vectors via influence functions \cite{ko1988robustness, ko1993robust}. Our work is the first to extend the notion of breakdown points for functions taking values in the unit sphere and Grassmannian manifolds.

\section{PROPOSED ALGORITHM: Spherically Normalized SVD}\label{sec:SpSVD}

\paragraph{Background}

The SVD of a $n\times p$ real-valued matrix $\Xv$ is denoted by $\Xv = \Uv \Dv \Vv^T = \sum_{r=1}^{n \land p} d_r \uv_r \vv_r^T$, where the diagonal matrix $\Dv$ contains the non-negative singular values $d_r$, arranged in descending order, and $\uv_r$ and $\vv_r$ are the $r$th left and right singular vectors, respectively, corresponding to the $r$th largest singular value $d_r$.  
To motivate our construction of robust SVD algorithms, we view the real-valued matrix $\Xv = (\xv_1,\dots,\xv_n)^T \in \Rb^{n\times p}$  as a data matrix, collecting the observed values of $p$ variables from $n$ individuals. 

The principal component analysis (PCA) applied to the data matrix $\Xv$ is closely related to the SVD of $\Xv$. Assuming that $\Xv$ is column-centered, the $r$th (empirical) principal component (PC) direction vector is the $r$th right singular vector $\vv_r$. The $r$th PC scores are given by the $n$-vector $\Xv \vv_r = (\xv_1^T\vv_r,\ldots,\xv_n^T\vv_r)^T$, consisting of the projection of each data point onto $\vv_r$. In fact, the PC score vector is also given by the SVD, that is, $\Xv \vv_r = d_r \uv_r$, in which the unit vector $\uv_r$ collects the standardized PC scores for $n$ individuals, and the sample standard deviation of the PC scores is $d_r / \sqrt{n}$. Dimensionality reduction in data matrix $\Xv$ can be equivalently achieved either by a rank-$R$ SVD approximation $\widehat\Xv_{R}^{\rm svd} := \sum_{r=1}^{R} d_r \uv_r \vv_r^T$, or by collecting the first $R$ triples of PC direction, score and standard deviation. Note that  $\widehat\Xv_{R}^{\rm svd}$ is the best rank-$R$ approximation to $\Xv$ in terms of the Frobenius norm. 

\paragraph{Motivation}

Both the SVD and standard PCA estimates are highly sensitive to data contamination in the data matrix $\Xv$. Consider an original data matrix $\Xv$ and a contaminated version $\Zv$. In the contaminated version, the values of the first row $\xv_1^T$ of $\Xv$ are replaced with arbitrary values. Denoting $\vv_r(\Xv)$ and $\vv_r(\Zv)$ as the $r$th right singular vectors of $\Xv$ and $\Zv$ respectively, we demonstrate in Section~\ref{sec:robust} that the singular vectors are sensitive to their input. Even if there is only one differing row (data point) between $\Xv$ and $\Zv$, the difference between $\vv_1(\Xv)$ and $\vv_1(\Zv)$ can be substantial. This occurs due to the presence of arbitrarily large $\| \zv_1 \|_2$ values, causing the singular vector $\vv_1(\Zv)$ to align almost parallel to $\zv_1$.
To limit such a potentially massive contribution of a single observation, \citet{locantore1999robust} proposed to \emph{normalize} each of $n$ data points in $\Xv$, transforming $\xv_i$ to $\xv_i / \|\xv_i\|_2$, then to apply the standard PCA algorithm for the normalized data, for their proposal of a robust PCA. Since all data points are on the unit sphere in $\Rb^p$ after normalization, the contribution of potential outliers is naturally limited.  
Building upon the robust PCA approach of \cite{locantore1999robust}, we propose a novel technique called \emph{Spherically Normalized SVD algorithm} (SpSVD for short), which provides a robust approximation of the first $R$ left and right singular vectors and singular values of (uncontaminated) $\Xv$, obtained purely from potentially contaminated matrix $\Zv$.

\paragraph{Algorithm}

Let $\Xv = [\xv_1,\dots,\xv_n]^T \in \Rb^{n \times p}$ be a potentially contaminated data matrix. For a predetermined  rank $R \in \{1,\dots, n \land p\}$ our goal is to define the ordered triple $(d_r^{\sph},\uv_r^{\sph},\vv_r^{\sph})$, for $r = 1,\dots,R$, that provides a highly insensitive and accurate rank-$R$ approximation $\widehat\Xv_{R}^{\sph} := \sum_{r=1}^{R} d_r^{\sph} \uv_r^{\sph} (\vv_r^{\sph})^T$ of $\Xv$. 

For the approximation of right singular vectors, we individually scale each row of matrix $\Xv$ to have unit length. Let $\widetilde{\Xv}_{\rm row} = [\xv_1 / \|\xv_1\|_2,\dots,\xv_n / \|\xv_n\|_2]^T$ represent the row-normalized data matrix. Subsequently, a standard low-rank SVD algorithm is applied to $\widetilde{\Xv}_{\rm row}$ to obtain the $R$ right singular vectors corresponding to the $R$ largest singular values of $\widetilde{\Xv}_{\rm row}$. The set of these right singular vectors is denoted as $V^R = 
\{\vv_1(\widetilde{\Xv}_{\rm row}), \ldots, \vv_R(\widetilde{\Xv}_{\rm row})\}$. Similarly, for the approximation of left singular vectors, we scale and collect each \emph{column} of $\Xv$ in the column-normalized data matrix $\widetilde{\Xv}^{\rm col}$. The first $R$ left singular vectors of $\widetilde{\Xv}^\col$ are then collected in the set $U^R = \{\uv_1(\widetilde{\Xv}^{\rm col}), \ldots, \uv_R(\widetilde{\Xv}^{\rm col})\}$. While the elements in $V^R$ and $U^R$ are candidates for  $(\vv_r^{\sph}$ and $\uv_r^{\sph})$, respectively, we do not set $(\uv_r(\widetilde{\Xv}^{\rm col}), \vv_r(\widetilde{\Xv}_{\rm row}))$ for $(\uv_r^{\sph},\vv_r^{\sph})$. This is because using mismatched labels  can provide a better approximation of $\Xv$. 

With $V^R$ and $U^R$ at hand, the triple $(d_r^{\sph},\uv_r^{\sph},\vv_r^{\sph})$ is defined sequentially. 
For the first triple, we solve the following: 
\begin{align}\label{eq:spsvd_1}
    (d_1^{\sph},\uv_1^{\sph},\vv_1^{\sph}) = \argmin_{ d \in \Rb, \uv \in U^R, \vv \in V^R} 
    \|\Xv - d\uv\vv^T\|_{\rm{F}_1},
\end{align}
where $\|\Av\|_{\rm{F}_1} = \sum_{i,j} |a_{ij}|$ is the element-wise 1-norm of the matrix $\Av$. For a fixed pair of $\uv=(u_1,\dots,u_n)^T \in U^R$ and $\vv = (v_1,\dots,v_p)^T \in V^R$, finding the solution to \eqref{eq:spsvd_1} with respect to $d\in \Rb$ is equivalent to solving a weighted median problem,\footnote{
The weighted median problem can be efficiently solved by the median of medians algorithm \citep{blum1973time}, or by other selection algorithms \citep{cormen2022introduction}. Therefore, by solving the weighted median for all candidate pairs, we obtain a solution to \eqref{eq:spsvd_1}.}
\begin{align}\label{eq:weighted_median}
    \min_{d} \sum_{u_{i}\neq 0, v_{j} \neq 0}|{u_{i}v_{j}}||\frac{x_{ij}}{u_{i}v_{j}} - d|.
\end{align}
For $r = 2,\ldots, R$, optimization problems similar to \eqref{eq:spsvd_1} are used to define the $r$th triple $(d_r^{\sph},\uv_r^{\sph},\vv_r^{\sph})$, but with the first $r-1$ triples deflated from each of $\Xv$, $\Uv_R$ and $\Vv_R$. That is, we solve  
\begin{align}\label{eq:spsvd}
    (d_r^{\sph},\uv_r^{\sph},\vv_r^{\sph}) = \argmin_{ d \in \Rb, \uv \in U_{r}^{R}, \vv \in  V_{r}^{R}} 
    \|\Xv_r - d\uv\vv^T\|_{\rm{F}_1},
\end{align} 
where $\Xv_r = \Xv - \sum_{l=1}^{r-1}d_l^{\sph}\uv_{l}^{\sph}(\vv_{l}^{\sph})^T$, 
$U_{r}^R = U^R \setminus \{\uv_1^{\sph},\ldots, \uv_{r-1}^{\sph}  \},$ and $V_{r}^R = V^R \setminus \{\vv_1^{\sph},\ldots, \vv_{r-1}^{\sph} \}$. Note that in \eqref{eq:spsvd_1} and 
 \eqref{eq:spsvd} above, the approximated singular value $d_r^{\sph}$ may be negative. Since singular values are, by definition, non-negative,  for each $r = 1,\ldots, R$, we update $(d_r^{\sph},\uv_r^{\sph},\vv_r^{\sph})$ by 
 $( s_r d_r^{\sph},s_r \uv_r^{\sph},\vv_r^{\sph})$, if $s_r := {\rm sign}(d_r^{\sph}) \neq 0$. This procedure is summarized in Algorithm~\ref{alg:approx}. 

\setlength{\algomargin}{1.5em}
\begin{algorithm}[t]
\DontPrintSemicolon
  \KwInput{data matrix $\Xv = (\xv_1,\dots,\xv_n)^T = (\xv^1,\dots,\xv^p) \in \Rb^{n \times p}, R \in \{1,\dots, n \land p\}.$}
  \KwOutput{$(d_r^{\sph},\uv_r^{\sph},\vv_r^{\sph})$ for $r=1,\dots,R$}
  Normalize $\widetilde{\Xv}_{\row} = [\xv_1 / \|\xv_1\|_2,\dots,\xv_n / \|\xv_n\|_2]^T$ and $\widetilde{\Xv}^{\col} = [\xv^1 / \|\xv^1\|_2,\dots,\xv^p / \|\xv^p\|_2]$ \;
  Apply SVD to $\widetilde{\Xv}_{\rm{row}}$ to obtain rank-$R$ approximation $\sum_{r=1}^{R}\widetilde{d}_r\widetilde{\uv}_r \left( {\vv}_r(\widetilde{\Xv}_{\row}) \right)^T$ \;
  Apply SVD to $\widetilde{\Xv}^{\rm{col}}$ to obtain rank-$R$ approximation $ \sum_{r=1}^{R}\widetilde{c}_r \left(\uv_r(\widetilde{\Xv}^{\rm col})\right) \widetilde{\vv}_r^T$  \;
  Set $V^R =  \{\vv_1(\widetilde{\Xv}_{\rm row}), \ldots, \vv_R(\widetilde{\Xv}_{\rm row})\}$ and $U^R = \{\uv_1(\widetilde{\Xv}^{\rm col}), \ldots, \uv_R(\widetilde{\Xv}^{\rm col})\}$\;
  \For{$r=1,\dots,R$}{
  Find the $r$th solution $(d_r^{\sph},\uv_r^{\sph},\vv_r^{\sph})$ of \eqref{eq:spsvd}\;
  Update $(d_r^{\sph},\uv_r^{\sph},\vv_r^{\sph})$ by 
 $( s_r d_r^{\sph},s_r \uv_r^{\sph},\vv_r^{\sph})$, if $s_r := {\rm sign}(d_r^{\sph}) \neq 0$\; 
  }
\caption{Rank-$R$ Approximation by SpSVD}\label{alg:approx}
\end{algorithm}

 We remark that one may try to choose $(\uv_r^\sph, \vv_r^\sph)$
 to be 
 \begin{equation}
     \label{eq:naive_choice}
     (\uv_r^\sph, \vv_r^\sph) := (\uv_r(\widetilde{\Xv}^{\rm col}),\vv_r(\widetilde{\Xv}_{\rm row})),
 \end{equation} 
 and solve \eqref{eq:spsvd_1} and 
 \eqref{eq:spsvd} only with respect to $d$. 
 Since the candidates in $V^R$ and $U^R$ are from different normalizations, there is no compelling reason to believe that the orders in the left and right singular values are related as in \eqref{eq:naive_choice}. We have empirically found that the solutions of \eqref{eq:spsvd_1} and 
 \eqref{eq:spsvd} 
 are typically different from the naive choice \eqref{eq:naive_choice}, more often when $R$ is large. Our approach of searching over the $R^2$ pairs of candidates provides generally better rank-$R$ approximations than the naive choice. On the other hand, potential downsides of our approach include the following: The $r$th triple $(d_r^{\sph},\uv_r^{\sph},\vv_r^{\sph})$ may depend on the choice of the rank $R \geq r$, and the computational complexity increases because of the discrete optimization needed in \eqref{eq:spsvd}. Nevertheless, for small $R$ the overall computational complexity of the proposed SpSVD algorithm is similar to that of a standard low-rank SVD algorithm.

 \paragraph{Computational Complexity}
 
The computational complexity of an algorithm is a measure of the amount of resources required for solving a problem of a given size. The overall computational complexity of Algorithm~\ref{alg:approx} applied to rank-$R$ approximation of a real matrix of size $n\times p$ is $O(npR^3)$. While the two applications of SVD as well as the normalization require $O(npR)$, the algorithm solves the weighted median problem \eqref{eq:weighted_median} in the linear time complexity of $O(np)$ for $R(R+1)(2R+1)/3$ times. As a comparison, the computational complexity of the standard rank-$R$ SVD algorithm is $O(npR)$ \citep{xu2023fast, yi2016fast, shamir2016fast, allen2016lazysvd}. Algorithm~\ref{alg:approx} is as efficient as SVD when $R$ is small.  As we compare numerically in Section~\ref{sec:numeric}, computing the approximation of a low-rank SVD  by our approach is up to 500 times (on average in real computation times) faster than state-of-the-art robust SVD algorithms proposed in \cite{zhang2013robust,brahma2017reinforced,candes2011robust}.
 
\paragraph{Statistical Accuracy}

When the data matrix $\Xv \in \Rb^{n\times p}$ is viewed as a  collection of $n$ observations $\xv_i$, the right singular vector $\vv_r(\Xv)$ is equivalent to the eigenvector of the sample covariance matrix of $\Xv$. Treating $\vv_r^\sph$ as an estimator of the $r$th eigenvector of the population covariance matrix, the estimator $\vv_r^\sph$ is consistent under adequate assumptions; see Section~\ref{subapp:consistency} of the supplementary material. 
While the standard SVD also provides a consistent estimator, SpSVD exhibit a statistical accuracy over contamination. Let $\Fc_{\Sigmav}$ denote a mean-zero, $p$-dimensional elliptical distribution \citep{cambanis1981theory}, with covariance matrix $\Sigmav$, whose eigen-decomposition is given by $\Sigmav = \sum_{j=1}^{p} \lambda_{j} \vv_{j} \vv_{j}^T$. Assume that $\lambda_{j+1} > \lambda_{j}$ for all $j$. Let $\epsilon > 0$ be the fraction of contaminations among $n$ samples: That is, the samples consist of both i.i.d (uncontaminated) samples $\xv_1, \dots, \xv_{(1-\epsilon)n} \sim \Fc_{\Sigmav}$ and the outliers $\yv_{1}, \dots, \yv_{\epsilon n}$ of arbitrary size and directions. Let ${\vv}_{j}^{\sph}$ be the $j$th left singular vector obtained by the proposed SpSVD applied to $[\xv_{1} , \dots, \xv_{(1-\epsilon)n},\yv_{1} , \dots, \yv_{\epsilon n}]^T$.

\begin{thm}[Statistical accuracy over infinitesimal contamination]{\label{thm:finsamp}}
     If (i) $n \ge Cp/\epsilon$, for a constant $C>0$, and (ii) for $j =1, \dots, p, \delta_j := \lbrace \lvert d_{j+1} - d_{j} \rvert, \lvert d_{j} - d_{j-1} \rvert \rbrace >0$ where $d_{j}$ is the $j$th singular value of the covariance matrix of $\frac{\xv_{1}}{\lVert \xv_{1} \rVert_{2}}$, then
    \begin{align}\label{eq:acc}
        \mathbb{E} [\min \lbrace \lVert {\vv}_{j}^{\sph} &- \vv_{j} \rVert_{2} , \lVert {\vv}_{j}^{\sph} + \vv_{j} \rVert_{2} \rbrace ] \nonumber \\ &\leq \frac{1}{\delta_j}(C' \epsilon + C'' \sqrt{(1-\epsilon)\epsilon})
    \end{align}
    for some absolute constants $C', C'' > 0$.
\end{thm}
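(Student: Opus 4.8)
The plan is to reduce \eqref{eq:acc} to a single bound on $\mathbb{E}\,\norm{\widehat{\Sigmav}-\Sigmav_0}_{\mathrm{op}}$ and then invoke the Davis--Kahan $\sin\Theta$ theorem. Write $\zv_i:=\xv_i/\norm{\xv_i}_2$ for the row-normalized clean samples. Up to sign, the singular vectors SpSVD extracts from the normalized matrix are eigenvectors of
\[
\widehat{\Sigmav}:=\tfrac1n\,\widetilde{\Xv}_{\row}^{\,T}\widetilde{\Xv}_{\row}=\tfrac1n\sum_{i=1}^{(1-\epsilon)n}\zv_i\zv_i^T+\tfrac1n\sum_{k=1}^{\epsilon n}\frac{\yv_k\yv_k^T}{\norm{\yv_k}_2^2}.
\]
Let $\Sigmav_0:=\mathbb{E}[\zv_1\zv_1^T]$. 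Because $\Fc_{\Sigmav}$ is mean-zero elliptical, $\zv_1$ is symmetric about the origin, so $\Sigmav_0$ equals the covariance of $\zv_1$ and satisfies $\tr(\Sigmav_0)=1$, $\norm{\Sigmav_0}_{\mathrm{op}}\le1$, and --- the structural fact legitimizing the spherical normalization --- $\Sigmav_0$ has the same ordered eigenbasis $\vv_1,\dots,\vv_p$ as $\Sigmav$, with eigenvalues $d_1\ge\cdots\ge d_p$ (those of assumption (ii)); this eigenbasis/ordering invariance of the spatial-sign covariance under elliptical symmetry is standard and underlies \citet{locantore1999robust}. The single-vector Davis--Kahan inequality then gives, deterministically for each realization,
\[
\min\{\norm{\vv_j^{\sph}-\vv_j}_2,\,\norm{\vv_j^{\sph}+\vv_j}_2\}\le\frac{2\sqrt2}{\delta_j}\,\norm{\widehat{\Sigmav}-\Sigmav_0}_{\mathrm{op}},
\]
so after taking expectations the task is to show $\mathbb{E}\,\norm{\widehat{\Sigmav}-\Sigmav_0}_{\mathrm{op}}\le c_1\epsilon+c_2\sqrt{(1-\epsilon)\epsilon}$ for absolute constants.

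\textbf{Splitting off the outliers.} I would write $\widehat{\Sigmav}-\Sigmav_0=(1-\epsilon)E_m+B-\epsilon\Sigmav_0$, where $m=(1-\epsilon)n$, $E_m:=\frac1m\sum_{i=1}^m(\zv_i\zv_i^T-\Sigmav_0)$ is the centered sample average over the clean normalized points, and $B:=\frac1n\sum_{k=1}^{\epsilon n}\yv_k\yv_k^T/\norm{\yv_k}_2^2$. The block $B$ is a sum of $\epsilon n$ rank-one projections, each scaled by $1/n$, so $\norm{B}_{\mathrm{op}}\le\epsilon$ for \emph{every} configuration of the outliers, and $\norm{\epsilon\Sigmav_0}_{\mathrm{op}}\le\epsilon$ as well. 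This is precisely where spherical normalization earns its robustness: the operator-norm footprint of each contaminated row is capped at $1/n$ regardless of its magnitude or direction, which is what produces the $C'\epsilon$ term.

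\textbf{Concentration of the clean part.} The summands $\zv_i\zv_i^T-\Sigmav_0$ are i.i.d., mean zero, bounded by $2$ in operator norm, with matrix variance $\norm{\mathbb{E}[(\zv_i\zv_i^T-\Sigmav_0)^2]}_{\mathrm{op}}=\norm{\Sigmav_0-\Sigmav_0^2}_{\mathrm{op}}\le1$ (using $\norm{\zv_i}_2=1$). Matrix Bernstein then gives $\mathbb{E}\,\norm{E_m}_{\mathrm{op}}\le c(\sqrt{(\log p)/m}+(\log p)/m)$; multiplying by $(1-\epsilon)$ and using assumption (i) in the form $1/n\le\epsilon/(Cp)$ converts this into $c'(\sqrt{(1-\epsilon)\epsilon}+\epsilon)$, the stray $\log p$ being swallowed because it is divided by $p$. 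Combining with the previous step and feeding the total into the Davis--Kahan bound yields \eqref{eq:acc} after renaming constants.

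\textbf{Where the difficulty sits.} The genuine content is the structural identity (same eigenbasis \emph{and} same ordering for the spatial-sign covariance of an elliptical law) together with the elementary deterministic bound $\norm{B}_{\mathrm{op}}\le\epsilon$; the rest is a routine application of Davis--Kahan and matrix Bernstein, and --- pleasantly --- the logarithmic factor from the latter is harmless because assumption (i) already carries a factor $p$. The point requiring the most care is a labeling one: the discrete relabeling in \eqref{eq:spsvd} must be shown to leave the index $j$ in place, i.e.\ $\vv_j^{\sph}=\pm\vv_j(\widetilde{\Xv}_{\row})$ rather than a permuted candidate, which should follow from the eigengap and sample-size assumptions but needs to be argued (alternatively one simply states \eqref{eq:acc} for $\vv_j(\widetilde{\Xv}_{\row})$); and one must take the constant $C$ in (i) large enough that the lower-order $(\log p)/n$ and $p/n$ contributions are absorbed into the $C'\epsilon$ term.
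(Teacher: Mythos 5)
Your proposal is correct and follows essentially the same route as the paper's proof: Davis--Kahan applied to the spatial-sign covariance $\Sigmav_0$ (whose shared eigenbasis with $\Sigmav$ is the structural fact the paper delegates to its Statistical Accuracy Theorem), a split of $\widehat{\Sigmav}-\Sigmav_0$ into a clean empirical-average part and an outlier part whose operator norm is deterministically at most $\epsilon$ thanks to the normalization, and assumption (i) to convert the $\sqrt{p/n}$ rate into $\sqrt{(1-\epsilon)\epsilon}$. The only (harmless) divergence is the concentration tool for the clean part --- you use matrix Bernstein with a $\log p$ factor, the paper invokes Vershynin's Theorem 4.7.1 with a $p$ factor --- and your explicit flagging of the candidate-relabeling issue is a point the paper glosses over.
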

In the above theorem, as the fractions $\epsilon$ and $\frac{p}{n}$ go to zero, the singular vectors of our method converge to the target singular vectors in an appropriate statistical context. The assumption $(i)\ n \geq Cp / \epsilon$ is more relaxed than the assumption $``n \geq  Cp / \epsilon^2"$ provided in \cite{diakonikolas2023nearly} (Similar assumptions were imposed in \cite{jambulapati2020robust,kong2020robust}.) If we assume $n \geq Cp / \epsilon^2$, the second term of \eqref{eq:acc} becomes negligible. Our result states the accuracy of unit singular vectors $\vv_j^\sph$ directly compared to $\vv_j$ (for all $j=1,\dots,p$). This is in contrast to the statements in \cite{xu2012outlier,jambulapati2020robust,diakonikolas2023nearly}, in which the accuracy of the only the first vector $\widehat{\vv}_1$ is compared indirectly by bounding $|\widehat{\vv}^T\Sigmav\widehat{\vv} - \lambda_1|$.

\section{EXTENSIONS OF BREAKDOWN POINTS}\label{sec:robust}
 
The breakdown point, originally proposed by \cite{hampel1968contributions}, and studied by \cite{huber1983notion,huber1984finite,huber2011robust}, is a common tool for evaluating quantitative robustness of statistics. Viewing a real-valued statistic as a function $f:\Xc^n \rightarrow \Rb$ that takes as input $n$ data points $\Xv := (\xv_1,\ldots,\xv_n) \in \Xc^n$ and outputs a real-valued $f(\Xv)$, the breakdown point of $f$ at the given data $\Xv$ is defined as the minimum number of corrupted data points that cause the statistic to ``break down.''
Formally, the breakdown point of $f$ at $\Xv$ is
\begin{align}\label{eq:bd_scalar}
\begin{split}
    \bp(f;\Xv) :=  \min_{1\leq l \leq n} \{l : \sup_{\Zv_l} |f(\Zv_l)-f(\Xv)|=\infty\},
\end{split}
\end{align}
where the supremum is taken over all possible corrupted collections $\Zv_l$ that are obtained from $\Xv$ by replacing $l$ data points of $\Xv$ with arbitrary values. 
In this sense, a function $f$ (giving the value of a statistic $f(\Xv)$) is said to \emph{break down} if the difference between the statistics computed  from corrupted data and from the original data, i.e., $f(\Zv_l)$ and $f(\Xv)$, is as large as possible, which was defined to be the infinity in the original definition \eqref{eq:bd_scalar} of the breakdown point. 
Note that in the literature \citep{huber1983notion,huber1984finite,huber2011robust,lopuhaa1991breakdown}, the (finite-sample) breakdown point is in fact defined as $\bp(f;\Xv) / n$, the fraction of the number of corrupted data points and the sample size. Nevertheless, for notational simplicity, we regard the number of data points $n$ as fixed, and define the breakdown point as a whole number.  
The breakdown point $\bp(f;\Xv)$ represents a critical threshold where a breakdown of $f$ does not occur when the number of corrupted data points is below $\bp(f;\Xv)$, but breakdown can occur when the number of corrupted data points is equal to or exceeds $\bp(f;\Xv)$.

We extend the notion of breakdown points to the situations where the singular vectors $\uv_r(\Xv)$ and $\vv_r(\Xv)$ and the subspaces spanned by these are the statistics of interest. This involves two distinct considerations: Breakdown of unit-sphere $S^{k-1}$ and grassmannian $\gr(k,r)$-valued statistics, and breakdown with respect to contamination matrices.

\paragraph{Breakdown of Unit Vectors and Subspaces}\label{sec:bd_subspace} 

Viewing each of the singular vectors $\uv_r(\Xv)$ and $\vv_r(\Xv)$ (or $\uv_r^\sph(\Xv)$ and $\vv_r^\sph(\Xv)$) as a statistic, the notion of breakdown point \eqref{eq:bd_scalar} naturally applies with the following modifications. Since a singular vector lies in the unit sphere $S^{k-1}$ (for $k = n$ or $p$), we measure the difference between $\vv, \wv \in S^{k-1}$ by $\theta(\vv, \wv) := \arccos(|\vv^T\wv|)$, the ``angle" between two directions. The maximum difference in this case is $\pi/2$. We will also be interested in the subspaces spanned by singular vectors. Let $\mathcal{V} = \mathcal{V}(\Xv)$ be the subspace spanned by $\{ \vv_r(\Xv): r=  1,\ldots,R\}$. Then 
$\Vc \in \gr(k,R)$, the Grassmannian manifold consisting of $R$-dimensional subspaces in $\Rb^k$. 
For two subspaces $\Vc = {\rm span}(\Vv), \Wc = {\rm span}(\Wv) \in \gr(k,R)$, the difference may be measured via the largest canonical angle $\theta(\Vc, \Wc) := \arccos( d_{\min}(\Vv^T\Wv) )$, where $d_{\min}(\Av)$ is the smallest singular value of $\Av$. Note that for $R = 1$, $\theta({\rm span}(\vv), {\rm span}(\wv)) = \theta (\vv, \wv)$. The maximum difference is also $\pi/2$, and 
we say $\Vc: \Rb^{n\times p} \to \gr(k,R)$ (or $S^{k}$) breaks down at $\Xv$ by replacing $l$ data points, if $\sup_{\Zv_l} \theta (\Vc(\Zv_l), \Vc(\Xv)) = \pi/2$. 
The breakdown point of $\Vc$ at $\Xv$ is then given by \eqref{eq:bd_scalar} with the definition of ``breakdown" given above.

\paragraph{Breakdown with respect to Contamination of Matrices}\label{sec:bd_matrix}

A data matrix, to which SVD is performed, is not always a statistical data matrix in $\Rb^{n \times p}$ consisting of $n$ data points with $p$ variables. We generalize the mechanism of data contamination from the observation-wise data contamination \eqref{eq:bd_scalar} to three different types of data contaminations. Treating each row, column, or element as a data point, we will discuss row-wise, column-wise, and block-wise contaminations of data matrix $\Xv$.

Let $\Vc: \Rb^{n \times p} \rightarrow \gr(k,r)$ be a statistic of interest. The row-wise breakdown point measures the robustness of $\Vc$ in terms of contamination of the rows of input matrix $\Xv$, and coincides with the traditional notion of breakdown point \eqref{eq:bd_scalar} when rows represent data points. That is, we define $\bp_{\row}(\Vc ;\Xv) = \min\{l : \sup_{\Zv_l} \theta(\Vc(\Zv_l),\Vc(\Xv)) ={\pi}/{2}, 1\leq l \leq n \}$, where the supremum is taken over all possible $\Zv_l$ obtained by replacing $l$ rows of $\Xv$ by arbitrary values. Similarly, the column-wise breakdown point measures the robustness of $\Vc$ with respect to contaminated columns (viewing each column as a data point), and is $\bp_{\col}(\Vc; \Xv) = \min\{l : \sup_{\Zv^l} \theta \big(\Vc(\Zv^l),\Vc(\Xv)\big) ={\pi}/{2}, 1\leq l \leq p \}$, in which ${\Zv^l}$ is given by replacing $l$ columns of $\Xv$.

In situations where each element of the $n \times p$ matrix $\Xv$ is considered as an observation, outlying observations may be scattered across the matrix, and it becomes challenging to devise an informative notion of robustness. We focus on the case that contamination occurs within a (possibly non-consecutive) block. For example, if three elements $x_{1,1}$, $x_{1,3}$ and $x_{2,3}$ of $\Xv$ are contaminated, then we say the outliers lie in a block of size $(2,2)$, in which the numbers correspond to two rows and two columns, respectively. We say $\Vc$ breaks downs at block-size  $(k,l)$ (at $\Xv$) if $\sup_{\Zv_k^l} \theta(\Vc(\Zv_{k}^l),\Vc(\Xv)) ={\pi}/{2}$, 
    where the corrupted data $\Zv_k^l$ are given by replacing the elements in a $k\times l$ block of $\Xv$.
Recall that the breakdown point $\bp(f; \Xv)$ \eqref{eq:bd_scalar} is a threshold, i.e., the minimum number of data points needed for the statistic to break down. To extend the definition of breakdown point to this block-wise contamination scenario, we adopt a partial order relation ``$\prec$'' among the block-sizes in $\Bc_{n,p} = \{(k,l) : 1 \leq k \leq n, 1 \leq l \leq p \}$, given by 
(i) $(i,j) \preceq (k,l)$ if $i \leq k$ and $j \leq l$, and (ii) $(i,j) \prec (k,l)$ if $(i,j) \preceq (k,l)$ and $(i,j) \neq (k,l)$.

The set $(\Bc_{n,p}, \prec)$ is only partially ordered, meaning that there are block-sizes $(i,j)$ and $(k,l)$ that can not be ordered; take $(2,3)$ and $(3,1)$ as an example. This is unavoidable due to the two-dimensional nature of $\Bc_{n,p}$. Nevertheless, utilizing the partial order provides a definition for block-wise breakdown point as a ``tipping'' point. 

\begin{defn}\label{def:bd_block}
 We say that $\Vc$ has a block-wise breakdown point $(i,j)$  at $\Xv$ if
        (i) for any $(k,l) \succeq (i,j)$, $\Vc$ breaks down at block-size $(k,l)$, and
         (ii) for any $(k',l') \prec (i,j)$, $\Vc$ does not break down at block-size $(k',l')$. 
\end{defn}
It is possible that there are multiple block-wise 
 breakdown points for $\Vc$, and we denote the set of all block-wise breakdown points of $\Vc$ at $\Xv$ by $\BP(\Vc;\Xv)$. 
The notion of block-wise breakdown points is more powerful than row and column-wise breakdown points, as the following lemma states. 
\begin{lem}\label{lem:bd_relation}\ 
        (i) If $(k,1) \in \BP(\Vc;\Xv)$ for some $1 \leq k \leq n$, then $\bp_{\col}(\Vc;\Xv) =1$, and 
        (ii) if $(1,l) \in \BP(\Vc;\Xv)$ for some $1 \leq l \leq p$, then $\bp_{\row}(\Vc;\Xv) =1$.
\end{lem}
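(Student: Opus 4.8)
The plan is to unpack the definitions and show that a block-wise breakdown point with a single column (resp. row) coordinate forces the corresponding column-wise (resp. row-wise) breakdown point to be $1$. I will prove part (i) in detail; part (ii) is identical after transposing the roles of rows and columns.

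\textbf{Setup for (i).} Suppose $(k,1) \in \BP(\Vc;\Xv)$ for some $1 \le k \le n$. By Definition~\ref{def:bd_block}(i), applied to the block-size $(n,1) \succeq (k,1)$, the statistic $\Vc$ breaks down at block-size $(n,1)$; that is, $\sup_{\Zv_n^1} \theta(\Vc(\Zv_n^1),\Vc(\Xv)) = \pi/2$, where the supremum runs over all matrices obtained from $\Xv$ by replacing the entries in some $n \times 1$ block. The key observation is that an $n \times 1$ block of $\Xv$ is exactly one full column of $\Xv$: replacing all $n$ entries in a single column is the same operation as replacing that column by an arbitrary vector in $\Rb^n$. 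Hence the family $\{\Zv_n^1\}$ of block-$(n,1)$-corrupted matrices coincides with the family $\{\Zv^1\}$ of matrices obtained by replacing one column of $\Xv$. Consequently
\begin{align*}
    \sup_{\Zv^1} \theta\big(\Vc(\Zv^1),\Vc(\Xv)\big) = \sup_{\Zv_n^1} \theta\big(\Vc(\Zv_n^1),\Vc(\Xv)\big) = \pi/2,
\end{align*}
so $\Vc$ breaks down by replacing a single column, which by the definition $\bp_{\col}(\Vc;\Xv) = \min\{l : \sup_{\Zv^l}\theta(\Vc(\Zv^l),\Vc(\Xv)) = \pi/2\}$ gives $\bp_{\col}(\Vc;\Xv) = 1$ (it cannot be $0$ since $\theta(\Vc(\Xv),\Vc(\Xv)) = 0 \neq \pi/2$).

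\textbf{Part (ii)} is proved the same way: if $(1,l) \in \BP(\Vc;\Xv)$ then Definition~\ref{def:bd_block}(i) with $(1,p) \succeq (1,l)$ shows $\Vc$ breaks down at block-size $(1,p)$, a $1 \times p$ block is exactly one full row, the block-$(1,p)$-corrupted family equals the one-row-corrupted family $\{\Zv_1\}$, and therefore $\bp_{\row}(\Vc;\Xv) = 1$.

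\textbf{Anticipated obstacle.} There is essentially no analytic difficulty here; the proof is a matching-of-definitions argument. The only point requiring a little care is verifying that the quantifier structure lines up: the block-wise definition supremizes over the \emph{choice of block location} as well as the corrupting values, and one must check that allowing the $n\times 1$ block (resp.\ $1\times p$ block) to sit in any column (resp.\ row) is precisely the same degree of freedom as the column index (resp.\ row index) in the definition of $\bp_{\col}$ (resp.\ $\bp_{\row}$) — which it is. One should also note explicitly that membership of $(k,1)$ in $\BP(\Vc;\Xv)$ is used only through clause (i) of Definition~\ref{def:bd_block} (the upward-closure of breakdown), so clause (ii) plays no role, and that $(n,1)$ indeed dominates $(k,1)$ in the partial order since $k \le n$ and $1 \le 1$.
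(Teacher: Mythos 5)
Your proof is correct and rests on the same key mechanism as the paper's: the upward closure of breakdown in clause (i) of Definition~\ref{def:bd_block}, combined with the identification of an $(n,1)$ block with a full column (and a $(1,p)$ block with a full row). The paper packages this inside a more general two-way characterization of $\bp_{\row}$ and $\bp_{\col}$ in terms of $\BP(\Vc;\Xv)$ (Lemma~\ref{lem:bd_relation_app}) and then specializes to $k=1$ or $l=1$, but the direction of that equivalence actually needed for the present statement is precisely your argument.
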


\section{ROBUSTNESS OF SpSVD} \label{sec:blockwise}

 We begin by highlighting the lack of robustness of the standard SVD.  
Let 
 ${\Vc}_R:\Rb^{n\times p} \rightarrow {\gr}(p,R)$ be the function that gives the rank-$R$ right singular subspace $\Vc_R(\Xv) = {\rm span}(\vv_1(\Xv),\ldots, \vv_R(\Xv))$ of input matrix $\Xv$. The function ${\Uc}_R$ for the left singular subspace is similarly defined. 

\begin{prop}\label{thm:bd_svd} 
Let $\Xv$ be any $n \times p$ real matrix. The following holds for any $R = 1,\ldots, n \land  p$.
    \begin{enumerate}[leftmargin = 5mm, label=(\roman*)]
        \item[(i)] For some $k \le R+1$, $(k,1) \in \BP(\Uc_R;\Xv)$. In particular, $\bp_{\row}(\Uc_R;\Xv) \le  R+1$, and $\bp_{\col}(\Uc_R;\Xv) =1$.
        \item[(ii)] For some $l \le R+1$, $(1,l) \in \BP(\Vc_R;\Xv)$. In particular, $\bp_{\row}(\Vc_R;\Xv) =1$, and $\bp_{\col}(\Vc_R;\Xv) \le R+1$. 
  \end{enumerate}
\end{prop}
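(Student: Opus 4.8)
The plan is to establish part~(i) by an explicit contamination and to deduce part~(ii) by transposition, using $\Uc_R(\Xv^T)=\Vc_R(\Xv)$ together with the fact that a $(k,1)$ block of $\Xv^T$ is a $(1,k)$ block of $\Xv$ (so $\bp_{\row}(\cdot;\Xv^T)=\bp_{\col}(\cdot;\Xv)$ and vice versa). For part~(i) I first reduce the block-wise statement to a single breakdown claim: it suffices to show $\Uc_R$ breaks down at block-size $(R+1,1)$. Indeed, the set of $k$ for which $\Uc_R$ breaks down at block-size $(k,1)$ is then a nonempty subset of $\{1,\dots,n\}$, and its minimum $k^\ast\le R+1$ is a block-wise breakdown point: clause~(ii) of Definition~\ref{def:bd_block} holds by minimality of $k^\ast$, while clause~(i) holds because for any $(k,l)\succeq(k^\ast,1)$ a $k^\ast\times 1$ block sits inside some $k\times l$ block, so the supremum of the angle over $k\times l$-block contaminations is at least that over the sub-block, namely $\pi/2$ (and it is always $\le\pi/2$). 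From $(k^\ast,1)\in\BP(\Uc_R;\Xv)$, Lemma~\ref{lem:bd_relation}(i) gives $\bp_{\col}(\Uc_R;\Xv)=1$; and $\bp_{\row}(\Uc_R;\Xv)\le R+1$ follows because replacing the $\le R+1$ full rows meeting our block (placing our values in column $j$, arbitrary elsewhere) dominates the block contamination.

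The combinatorial core is a dimension count. Assume $R<n$ and that $\Uc_R(\Xv)$ is well-defined (both fail only in the degenerate case $R=n\le p$, where $\Uc_R(\Xv)=\Rb^n$ and the claim is vacuous). Then $\Uc_R(\Xv)^\perp\subseteq\Rb^n$ has dimension $n-R\ge 1$, while the coordinate subspace $W=\{\uv\in\Rb^n:\uv_\ell=0\ \text{for}\ \ell>R+1\}$ has dimension $R+1$; since $(n-R)+(R+1)>n$, there is a nonzero $\wv\in\Uc_R(\Xv)^\perp\cap W$, i.e.\ a nonzero vector orthogonal to $\Uc_R(\Xv)$ whose support lies in $\{1,\dots,R+1\}$. (For generic $\Xv$ no nonzero vector of the $(n-R)$-dimensional space $\Uc_R(\Xv)^\perp$ has support of size $\le R$, so $R+1$ is essentially sharp.)

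Now fix any column index $j$ and, for $t>0$, let $\Zv_t$ be the matrix obtained from $\Xv$ by adding $t\,w_\ell$ to the $(\ell,j)$ entry for each $\ell$ in the support of $\wv$; this alters at most $R+1$ entries, all within column $j$, hence $\Zv_t$ is a contamination inside an $(R+1)\times 1$ block. Writing $\xv^j$ for the $j$th column of $\Xv$, one computes $\Zv_t\Zv_t^T = t^2\,\wv\wv^T + t(\xv^j\wv^T+\wv(\xv^j)^T) + \Xv\Xv^T$. A Rayleigh-quotient argument on $\wv^\perp$ (where the two $O(t)$ terms vanish) shows $d_2(\Zv_t)\le d_1(\Xv)$ stays bounded, whereas $d_1(\Zv_t)^2\ge t^2\|\wv\|^2+O(t)\to\infty$; so for large $t$ the top left singular vector $\uv_1(\Zv_t)$ is well-defined and, the spectral gap diverging, standard eigenvector perturbation gives $\uv_1(\Zv_t)\to\pm\,\wv/\|\wv\|$. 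Since $\uv_1(\Zv_t)\in\Uc_R(\Zv_t)$ for every $R\ge 1$ and $\wv\perp\Uc_R(\Xv)$, the smallest singular value of the cross-Gram matrix of orthonormal bases is at most $\|P_{\Uc_R(\Xv)}\uv_1(\Zv_t)\|\to 0$, hence $\theta(\Uc_R(\Zv_t),\Uc_R(\Xv))\to\pi/2$. As this angle never exceeds $\pi/2$, the supremum over $(R+1)\times 1$-block contaminations equals $\pi/2$, so $\Uc_R$ breaks down at block-size $(R+1,1)$. Together with the reductions this proves~(i), and transposing proves~(ii).

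The obstacles here are bookkeeping rather than hard analysis. One must take care to verify \emph{both} clauses of Definition~\ref{def:bd_block} for the tipping point $(k^\ast,1)$ — handled by the minimality and block-enlargement arguments above. One must also note that for large $t$ the lower singular values of $\Zv_t$ may be tied, so $\Uc_R(\Zv_t)$ is not literally pinned down; this is harmless because $d_1(\Zv_t)$ is a \emph{simple} singular value, so $\uv_1(\Zv_t)$ is unambiguous and belongs to every admissible rank-$R$ left singular subspace, which is all we use. Finally, the boundary cases $R=n$ in~(i) and $R=p$ in~(ii) must be set aside (read as vacuous), since there the relevant singular subspace is the whole ambient space and cannot move.
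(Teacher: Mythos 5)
Your proposal is correct, and it uses the same underlying contamination as the paper: a dimension count produces a nonzero $\wv\in\Uc_R(\Xv)^{\perp}$ supported on at most $R+1$ coordinates, and one blows up a single column in the direction of $\wv$. Where you differ is in how you certify that the contaminated subspace swings toward $\wv$. The paper proves Proposition~\ref{thm:bd_svd} by specializing its proof of Theorem~\ref{thm:bd_rho}, which is a loss-sublevel-set argument: it shows that for large $c_0$ any minimizer $\uv$ of the element-wise loss must lie in a small cone $\Sc_\eps$ around $\wv$, then converts that to an angle bound via a (rather terse) Weyl-type step. You instead exploit the quadratic structure directly: $\Zv_t\Zv_t^T=\Xv\Xv^T+t\bigl(\xv^j\wv^T+\wv(\xv^j)^T\bigr)+t^2\wv\wv^T$, a Courant--Fischer bound on $\wv^{\perp}$ keeps $d_2(\Zv_t)$ bounded while $d_1(\Zv_t)\to\infty$, so $\uv_1(\Zv_t)\to\pm\wv/\|\wv\|_2$ and $d_{\min}$ of the cross-Gram matrix tends to $0$. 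For the pure SVD case this is cleaner and avoids the constants $M$, $M'$ and the cone bookkeeping; the paper's route buys generality over arbitrary coercive losses $\rho_{ij}$, which is needed for Theorem~\ref{thm:bd_rho} but not here. You are also more explicit than the paper on two points it leaves implicit: verifying \emph{both} clauses of Definition~\ref{def:bd_block} at the minimal $k^{\ast}$ (the paper only exhibits breakdown at $(R+1,1)$ and asserts membership in $\BP$), and handling possible ties among the lower singular values of $\Zv_t$ as well as the degenerate boundary case $R=n$ (respectively $R=p$), which the paper's statement "for any $R=1,\ldots,n\land p$" silently excludes via its assumption $R<n$.
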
 
In particular, the first right singular vector $\vv_1(\Xv)$ breaks down even with contamination of one row of $\Xv$, or two elements in a block of size $(2,1)$. This shows that the standard SVD is highly sensitive to outliers in the matrix. 
Surprisingly, a family of seemingly robust algorithms for SVD approximation, which we call Element-wise Loss SVD (or ELSVD for short), turns out to have very low breakdown points. In particular, the ELSVD, studied in \cite{gabriel1979lower,liu2003robust,ke2005robust,zhang2013robust}, is given by solving the following problem: 
For $r = 1,\ldots,R$, 
\begin{align}\label{eq:robSVD}
\begin{split}
    ({d}_r^{\rho},{\uv}_r^{\rho},{\vv}_r^{\rho}) = &\argmin_{d \geq 0, \uv \in S^{n-1}, \vv\in S^{p-1}}
    \rho(\Xv - d \uv \vv^T ) \\ &+ \Pc_1(\uv) + \Pc_2(\vv),
\end{split}
\end{align}
subject to an orthogonality constraint. In \eqref{eq:robSVD}, the function $\rho( \cdot )$ is defined as $\rho(\Zv) = \sum_{i=1}^p\sum_{j=1}^n \rho_{ij}(z_{ij})$ for $\Zv=(z_{ij})$, where $\rho_{ij}:\Rb \rightarrow \Rb$ are symmetric and non-negative loss functions, and $\Pc_1$, $\Pc_2$ are regularization terms. Note that for $\rho_{ij}(z) = z^2$ without the regularization terms, the solution to \eqref{eq:robSVD} coincides with the standard SVD. 
The element-wise losses can be set as the $L_1$ or Huber loss, which are commonly believed to induce robustness. Let ${\Uc}_R^{\rho}:\Rb^{n\times p} \rightarrow {\gr}(n,R)$ and ${\Vc}_R^{\rho}:\Rb^{n\times p} \rightarrow {\gr}(p,R)$ be the functions that provide the $R$th left and right singular spaces, respectively, obtained from ELSVD using some $\rho$, $\Pc_1$, and $\Pc_2$.

\begin{thm}\label{thm:bd_rho}
Suppose that  $\rho_{ij}$ satisfies that $\rho_{ij}(z)\rightarrow \infty$ as $|z| \rightarrow \infty$ (for all $i,j$), and the functions $\Pc_1$, $\Pc_2$ restricted to the domains $\uv\in S^{n-1}$ and $\vv \in S^{p-1}$ respectively are each upper bounded. Then, the conclusions of Proposition~\ref{thm:bd_svd} hold when $\Uc_R$ and $\Vc_R$ are each replaced by $\Uc_R^{\rho}$ and  $\Vc_R^{\rho}$.
\end{thm}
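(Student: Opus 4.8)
The plan is to reuse, for the element-wise-loss SVD, the very contamination scheme that yields Proposition~\ref{thm:bd_svd} for the ordinary SVD, but to replace the spectral mechanism (an arbitrarily large entry drags the leading singular vectors onto its support) by one that invokes only the coercivity hypothesis $\rho_{ij}(z)\to\infty$. Transposing $\Xv$ turns the ELSVD problem for $\Xv$ into the ELSVD problem for $\Xv^{T}$ with $\Pc_1$ and $\Pc_2$ interchanged, and the hypotheses on $\Pc_1,\Pc_2$ are symmetric; since moreover $\Uc_R^{\rho}(\Xv^{T})=\Vc_R^{\rho}(\Xv)$ and $\Vc_R^{\rho}(\Xv^{T})=\Uc_R^{\rho}(\Xv)$, it suffices to prove part~(ii) --- the statement for $\Vc_R^{\rho}$ --- and part~(i) follows by applying it to $\Xv^{T}$.

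For part~(ii), fix $\Xv$ (we take $R<p$; for $R=p$ the manifold $\gr(p,R)$ is a single point and there is nothing to break), a row index $i$, and $R+1$ distinct column indices $j_1,\dots,j_{R+1}$. First I would bound the optimal ELSVD objective on a suitable family of contaminations \emph{uniformly in the size of the contamination}. Pick a nonzero $\fv\in\Rb^{p}$ supported on $\{j_1,\dots,j_{R+1}\}$ (to be specified later), and for $M>0$ let $\Zv(M)$ be $\Xv$ with the entry $x_{i,j_s}$ replaced by $Mf_{j_s}$ for each $s$; this is a block-$(1,R+1)$ contamination. A feasible rank-$R$ point that ``parks'' the whole contamination in one dyad is: first dyad $d_1\uv_1\vv_1^{T}$ with $\uv_1$ the $i$th canonical basis vector of $\Rb^{n}$, $\vv_1=\fv/\lVert\fv\rVert_2$, $d_1=M\lVert\fv\rVert_2$, and the remaining $\uv_r,\vv_r$ completing orthonormal families (possible since $R\le n\land p$) with $d_2=\dots=d_R=0$. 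Its residual is just $\Xv$ with the $R+1$ contaminated cells set to $0$; since $\rho_{ij}\ge 0$ and $\Pc_1,\Pc_2$ are bounded above on the spheres, the objective here, and hence the optimal objective, is at most a constant $C$ independent of $M$. (Well-posedness of the ELSVD map makes the objective bounded below on the feasible set too, so the optimal residual has $\rho$-value at most a constant $C'$, again independent of $M$.)

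Next I would turn this into a rigidity statement. Coercivity of each $\rho_{ij}$ makes the sublevel set $\{z:\rho_{ij}(z)\le C'\}$ bounded, so at any optimal (or near-optimal) ELSVD solution every residual entry has absolute value $\le T$ for a constant $T=T(C')$, uniformly in $M$. Reading this off in row $i$: the $i$th row of the fitted matrix $\sum_r d_r^{\rho}\uv_r^{\rho}(\vv_r^{\rho})^{T}$ differs coordinatewise by at most $T$ from the $i$th row of $\Zv(M)$, which is $M\fv$ plus a vector of bounded $\ell_\infty$-norm; so it equals $M\fv+\gv$ with $\lVert\gv\rVert_\infty$ bounded independently of $M$. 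This fitted row lies in $\Vc_R^{\rho}(\Zv(M))=\mathrm{span}(\vv_1^{\rho},\dots,\vv_R^{\rho})$, whence $\mathrm{dist}_2\big(\fv/\lVert\fv\rVert_2,\ \Vc_R^{\rho}(\Zv(M))\big)=O(1/M)$: the space $\Vc_R^{\rho}(\Zv(M))$ contains unit vectors converging to $\fv/\lVert\fv\rVert_2$ as $M\to\infty$. Finally I would choose $\fv$ to be any nonzero vector in $\Rb^{\{j_1,\dots,j_{R+1}\}}$ orthogonal to the coordinate projection of $\Vc_R^{\rho}(\Xv)$ onto those $R+1$ coordinates; such $\fv$ exists because that projection has dimension at most $\dim\Vc_R^{\rho}(\Xv)\le R<R+1$, and it forces $\fv\perp\Vc_R^{\rho}(\Xv)$. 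Then the largest canonical angle $\theta\big(\Vc_R^{\rho}(\Zv(M)),\Vc_R^{\rho}(\Xv)\big)\to\pi/2$, so $\Vc_R^{\rho}$ breaks down at block-size $(1,R+1)$. Letting $l^{*}\le R+1$ be the least $l$ with breakdown at $(1,l)$, one has $(1,l^{*})\in\BP(\Vc_R^{\rho};\Xv)$ --- every block-size $\succeq(1,l^{*})$ contains a $(1,l^{*})$ block and hence breaks down, while every block-size strictly below it is $(1,l')$ with $l'<l^{*}$ and does not --- so Lemma~\ref{lem:bd_relation}(ii) gives $\bp_{\row}(\Vc_R^{\rho};\Xv)=1$, and $\bp_{\col}(\Vc_R^{\rho};\Xv)\le l^{*}\le R+1$ is immediate.

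The step I expect to be the real obstacle is the contamination-uniform residual bound: this is exactly where bounded-influence losses such as $L_1$ or Huber fail to protect ELSVD, since plain coercivity already forces the optimal fit to chase the arbitrarily large contaminated cells while the loss and a sphere-bounded penalty are powerless to stop it. Everything else is bookkeeping, subject to two routine caveats: one interprets $\Vc_R^{\rho}(\Xv)$ as a subspace of dimension at most $R$ (all the dimension count uses), and one must ensure the ELSVD map is well defined --- but since only near-optimal solutions are ever needed, an approximate argmin suffices and the argument is unchanged.
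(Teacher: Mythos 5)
Your proof is correct and follows essentially the same route as the paper's: plant $R+1$ arbitrarily large entries proportional to a vector supported on $R+1$ coordinates and orthogonal to the original singular subspace, use coercivity of the $\rho_{ij}$ together with the sphere-bounded penalties to show the optimal fit must track the planted direction, and finish via the canonical-angle bound, the transposition symmetry, and Lemma~\ref{lem:bd_relation}. The only cosmetic difference is in how the bounded-objective observation is exploited --- the paper rules out candidate singular vectors outside a shrinking cone around the planted direction by an objective-value comparison, whereas you deduce uniform boundedness of the optimal residuals and read the conclusion off the fitted contaminated row --- but this is the same use of coercivity, not a different argument.
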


In contrast, the SVD approximations given by the proposed SpSVD have higher breakdown points, as we explain below. Let ${\Vc}_R^{\sph}: \Rb^{n\times p} \rightarrow {\gr}(p,R)$ be given by ${\Vc}_R^{\sph}(\Xv) = {\rm span}(\vv_1^\sph, \ldots, \vv_R^\sph)$, the dimension-$R$  right singular subspace approximated by SpSVD. The left singular subspace function ${\Uc}_R^{\sph}$ is defined similarly. 

Recall that $\widetilde{\Xv}_\row = (\xv_1/\|\xv_1\|_2, \dots, \xv_n/\|\xv_n\|_2)^T \in \Rb^{n \times p}$ denotes the row-normalized matrix. Let $\Piv_R \coloneqq \Piv_R(\Xv) $ be  the $p\times p$ matrix of projection onto the orthogonal complement of ${\Vc}_R^{\sph}(\Xv)$, and let $\lambda_r(\Xv)$ denotes the $r$th largest singular value of $\Xv$.
Define 
\begin{equation}\label{eq:nr}
\begin{split}
    n_R \coloneqq n_R(\Xv) = \min_{1 \leq k\leq n} \Big\{&k : k \geq \inf_{\widetilde{\Xv}_k}\big\{\big(\lambda_R(\widetilde{\Xv}_k)\big)^2 \\
    & - \big(\lambda_1(\widetilde{\Xv}_k \Piv_R)\big)^2 \big\}\Big\},
\end{split}
\end{equation}  
where infimum is taken over all possible submatrices $\widetilde{\Xv}_k \in \Rb^{(n-k)\times p}$ of $\widetilde{\Xv}_\row$ obtained by choosing $n-k$ rows of $\widetilde{\Xv}_\row$. Similarly, let $p_R \coloneqq p_R(\Xv)$ be given by $p_R(\Xv) = n_R(\Xv^T)$. The number $n_R$ represents a lower bound on the minimum number of rows that can break down ${\Vc}_R^{\sph}$, since it can be shown that 
  the inequality in \eqref{eq:nr} is a necessary condition for ${\Vc}_R^{\sph}$ to break down with $k$ outliers. Similarly, the number $p_R$ is a lower bound for the column-wise breakdown point for the left singular subspace approximation $\Uc_R^{\sph}$.
  
\begin{thm}\label{thm:bd_sph} Let $\Xv$ be any $n \times p$ real matrix. The following holds for any $R = 1,\ldots, n \land  p$.
    \begin{enumerate}[leftmargin = 5mm, label=(\roman*)]
\item $\bp_{\row}(\Uc_R^{\sph};\Xv) \leq R+1$, $\bp_{\col}(\Uc_R^{\sph};\Xv) \geq p_R$, and for any $(k,l) \in \BP(\Uc_R^{\sph};\Xv)$, $(1,p_R) \preceq (k,l) $.
        \item $\bp_{\row}(\Vc_R^{\sph};\Xv) \geq n_R$, $\bp_{\col}(\Vc_R^{\sph};\Xv) \leq R+1$, and for any $(k,l) \in \BP(\Vc_R^{\sph};\Xv)$, $(n_R,1) \preceq (k,l) $
    \end{enumerate}
\end{thm}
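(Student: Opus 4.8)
The plan is to reduce the whole statement to the behaviour of the ordinary top-$R$ singular subspace of an appropriately normalized matrix, and then argue the row- and column-wise bounds separately. The key structural observation is that the relabeling optimization in Algorithm~\ref{alg:approx} does not affect the span: since the $r$th step selects $\vv_r^\sph$ from $V_r^R=V^R\setminus\{\vv_1^\sph,\dots,\vv_{r-1}^\sph\}$ and $\lvert V^R\rvert=R$, we get $\{\vv_1^\sph,\dots,\vv_R^\sph\}=V^R$, and the final sign-flip update leaves each $\vv_r^\sph$ unchanged. Hence for any input $\Zv$, $\Vc_R^\sph(\Zv)={\rm span}(V^R)$ equals the ordinary top-$R$ right singular subspace $\Vc_R(\widetilde\Zv_\row)$ of the row-normalized matrix, and likewise $\Uc_R^\sph(\Zv)=\Uc_R(\widetilde\Zv^\col)$. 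Since $\Uc_R^\sph(\Xv)=\Vc_R^\sph(\Xv^T)$ and $p_R(\Xv)=n_R(\Xv^T)$, part~(i) is exactly part~(ii) applied to $\Xv^T$ with the roles of row and column contamination interchanged, so it suffices to prove~(ii).

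For the lower bound $\bp_\row(\Vc_R^\sph;\Xv)\ge n_R$, I would show the inequality in \eqref{eq:nr} is necessary for $\Vc_R^\sph$ to break down under replacement of $k$ rows. Replacing $k$ rows of $\Xv$ replaces the corresponding $k$ unit rows of $\widetilde\Xv_\row$ by arbitrary unit rows, so the Gram matrix of the contaminated normalized matrix is $M=\Wv^T\Wv+\widetilde\Xv_k^T\widetilde\Xv_k$, where $\widetilde\Xv_k\in\Rb^{(n-k)\times p}$ is the surviving clean submatrix and $\lVert\Wv^T\Wv\rVert_{\mathrm{op}}\le\lVert\Wv\rVert_{\mathrm F}^2\le k$. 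If $\Vc_R^\sph$ breaks down with $k$ rows there is a sequence of such contaminations carrying unit vectors $\av_m\in\Vc_R(M_m)$ with $\lVert\Piv_R\av_m\rVert\to1$; as there are only finitely many patterns of corrupted rows, I pass to a subsequence along which the clean submatrix is a fixed $\widetilde\Xv_k$. Then $\av_m^T M_m\av_m\ge\lambda_R(M_m)\ge(\lambda_R(\widetilde\Xv_k))^2$ by Weyl's inequality (since $\Wv^T\Wv\succeq0$), while splitting $\av_m=\Piv_R\av_m+(\Id-\Piv_R)\av_m$ and using $\lVert\widetilde\Xv_k\Piv_R\rVert_{\mathrm{op}}=\lambda_1(\widetilde\Xv_k\Piv_R)$ gives $\av_m^T M_m\av_m\le k+\big(\lambda_1(\widetilde\Xv_k\Piv_R)\lVert\Piv_R\av_m\rVert+\lambda_1(\widetilde\Xv_k)\lVert(\Id-\Piv_R)\av_m\rVert\big)^2$, whose $\limsup$ is $k+(\lambda_1(\widetilde\Xv_k\Piv_R))^2$. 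Combining, $k\ge(\lambda_R(\widetilde\Xv_k))^2-(\lambda_1(\widetilde\Xv_k\Piv_R))^2\ge\inf_{\widetilde\Xv_k}\{(\lambda_R(\widetilde\Xv_k))^2-(\lambda_1(\widetilde\Xv_k\Piv_R))^2\}$, so $k$ lies in the set defining $n_R$ and hence $k\ge n_R$.

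For the upper bound $\bp_\col(\Vc_R^\sph;\Xv)\le R+1$ I would construct an explicit contamination of $R+1$ columns (assuming $R<p$; when $R=p$ the right singular subspace is improper and the bound is read accordingly). Fix an index set $J$ with $\lvert J\rvert=R+1$ and set the $n\times(R+1)$ submatrix of $\Zv$ on the columns $J$ equal to $t\Gv$; as $t\to\infty$ every row of $\widetilde\Zv_\row$ concentrates on the coordinate subspace $E={\rm span}\{e_j:j\in J\}$ and $\widetilde\Zv_\row\to\widetilde\Gv S^{T}$, where $S$ isometrically embeds $\Rb^{R+1}$ onto $E$ and $\widetilde\Gv$ is $\Gv$ with rows normalized. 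Since $\dim\big(E\cap\Vc_R^\sph(\Xv)^\perp\big)\ge(R+1)+(p-R)-p=1$, pick a unit vector $w^\ast=Sv^\ast\in E\cap\Vc_R^\sph(\Xv)^\perp$ and choose $\Gv$ (for instance with its rows running over an orthonormal basis of an $R$-dimensional $W\ni v^\ast$, suitably repeated) so that the top-$R$ right singular subspace of $\widetilde\Gv$ is $W$ with a strict gap at index $R$. Then the top-$R$ right singular subspace of $\widetilde\Gv S^T$ is $SW\ni w^\ast$, which makes canonical angle $\pi/2$ with $\Vc_R^\sph(\Xv)$; by continuity of the top-$R$ subspace under the gap, $\theta\big(\Vc_R(\widetilde\Zv_\row),\Vc_R^\sph(\Xv)\big)\to\pi/2$, so $\Vc_R^\sph$ breaks down at $R+1$ columns. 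Finally the block-wise claim follows: if $(k,l)\in\BP(\Vc_R^\sph;\Xv)$ then $(k,p)\succeq(k,l)$ forces breakdown at block-size $(k,p)$, i.e.\ with $k$ full rows corrupted, so $\bp_\row(\Vc_R^\sph;\Xv)\le k$ and hence $k\ge n_R$, giving $(n_R,1)\preceq(k,l)$.

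I expect the main obstacle to be the lower-bound step. Breakdown is defined by a supremum that need not be attained, so the argument must run through a sequence of contaminations; the compactness input --- that there are only finitely many patterns of corrupted rows --- is what lets me fix the clean submatrix $\widetilde\Xv_k$ along a subsequence, and then the delicate point is to bound the cross term in $\lVert\widetilde\Xv_k\av_m\rVert^2$ tightly enough that, in the limit $\lVert\Piv_R\av_m\rVert\to1$, it collapses to exactly $(\lambda_1(\widetilde\Xv_k\Piv_R))^2$, matching the quantity appearing in \eqref{eq:nr}. Routine but necessary side issues are ties at the $R$th singular value of the normalized matrices (so that $\Vc_R(\widetilde\Xv_\row)$ must be read as the span actually returned by the SVD routine, with the conclusion holding for any such choice) and the degenerate ranks $R=p$ or $R=n$, where a singular subspace fills the ambient space and cannot break down.
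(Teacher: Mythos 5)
Your proposal is correct and takes essentially the same route as the paper's proof: the lower bound comes from the same Rayleigh--quotient sandwich $(\lambda_R(\widetilde\Xv_k))^2 \le \av^T M \av \le k + (\lambda_1(\widetilde\Xv_k\Piv_R))^2 + o(1)$ applied to a unit vector of the corrupted top-$R$ subspace that becomes orthogonal to $\Vc_R^\sph(\Xv)$, and the upper bound comes from the same construction of sending $R+1$ columns to infinity so that the normalized rows collapse onto a coordinate subspace meeting $\Vc_R^\sph(\Xv)^\perp$ (the paper uses the rank-one special case $\Gv = c\,\1v_n(d_1,\dots,d_{R+1})$), with the remaining claims obtained by transposition exactly as you describe. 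Your explicit reduction $\Vc_R^\sph(\Zv)=\Vc_R(\widetilde\Zv_\row)$ and your pigeonhole subsequence in place of the paper's closing infimum over submatrices are only presentational differences.
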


In Theorem~\ref{thm:bd_sph}, $(n_R,1)$ represents a lower bound of $\BP(\Vc_R^{\sph};\Xv)$. This bound may be perceived as too low because it only includes the column size $1$. However, the theorem also implies that no breakdown occurs for $\Vc_R^{\sph}$ by  contamination of blocks of size $(k,l)$, for any $k < n_R$ and $l=1,\dots,p$. This is because every point in $\BP(\Vc_R^{\sph};\Xv)$ is greater than or equal to $(n_R,1)$. In other words, if $(k,k) \in \BP(\Vc_R^{\sph};\Xv)$, then $k \geq n_R$.

Evaluating the lower bounds $n_R$ and $p_R$ appears to be challenging. Through numerical experiments, we  have observed that while $n_R$ and $p_R$ depend  on the matrix $\Xv$, they tend to be larger when there is a larger gap between $\lambda_R(\Xv)$ and $\lambda_{R+1}(\Xv)$; See Section~\ref{app:lower_bound} of the supplementary material.  

We additionally investigate the breakdown points in COP \citep{rahmani2017coherence}, which adopt screening out potential outliers; See Section~\ref{app:bdpoint_ex} of the supplementary material.

\section{NUMERICAL STUDIES}\label{sec:numeric}

In this section we evaluate the empirical performance of SpSVD in terms of accuracy, robustness and computational scalability, making comparisons to the standard SVD algorithm, ELSVD with Huber's loss of \cite{zhang2013robust}, RPCA of \citet{candes2011robust}, R2PCP of \citet{brahma2017reinforced}, as well as COP of \citet{rahmani2017coherence}.

RPCA  aims to recover a low-rank matrix and a sparse outlier matrix from the data matrix by decomposing it into the sum of the two matrices, while R2PCP \citep{she2016robust,brahma2017reinforced} models not only  a sparse outlier matrix but also the orthogonal complement of the low-rank matrix where outliers lying. Since both approaches provide robust low-rank approximations, we apply the standard SVD to the low-rank approximation to extract the left and right singular vectors and singular value approximations. COP adopts normalizing and filtering out outliers in constructing PC directions.

\paragraph{Simulation Experiment}   
We model the data matrix without outliers $\Xv \in \Rb^{n \times p}$ as the sum of low-rank $\Lv$ and $\Ev$ consisting of standard normal random noises with $n=200$ and $p=100$. The low-rank $\Lv$ is $\Lv= \sum_{r=1}^3 d_r \uv_r \vv_r^T$, $(d_1,d_2,d_3)=(80,70,60)$, and $\Uv = (\uv_1,\uv_2,\uv_3)$ and $\Vv= (\vv_1,\vv_2,\vv_3)$ are randomly sampled where the uniform distribution on the set of orthogonal matrices, respectively. We add a sparse outlier matrix $\Sv$ with $\|\Sv\|_{\rm{F}} =1$ multiplied by $\eta$, a scaling parameter, to the data matrix $\Xv$, i.e.,
\begin{align*}
\Xv = \Lv  + \Ev, \quad \Xv^{\eta} = \Lv  + \eta \Sv+ \Ev.
\end{align*}
The outlier matrix $\Sv$ has non-zero elements only in arbitrary blocks with the block-size $(0.05n,0.05p) \in \Bc_{n,p}$ where the block-size is set based on the computation of the lower bounds $n_R$ and $p_R$ described in Theorem~\ref{thm:bd_sph} for a reduced-size matrix; see Section~\ref{app:lower_bound} of the supplementary material. 
We evaluate the performance of different methods by gradually increasing $\eta$ from 0 to 1000, and repeating the simulation 100 times for each value of $\eta$.

\begin{figure}[ht]
\vspace{.3in}
     \centering
    \begin{subfigure}[b]{0.5\textwidth}
         \centering
         \includegraphics[width=\textwidth]{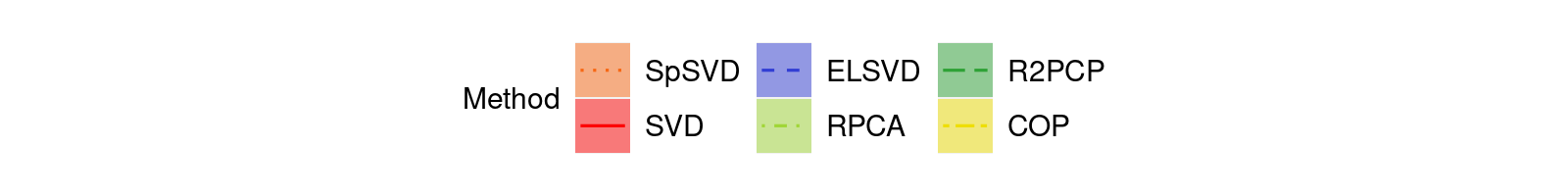}
    \end{subfigure}\\
    \begin{subfigure}[b]{0.20\textwidth}
         \centering
         \includegraphics[width=\textwidth]{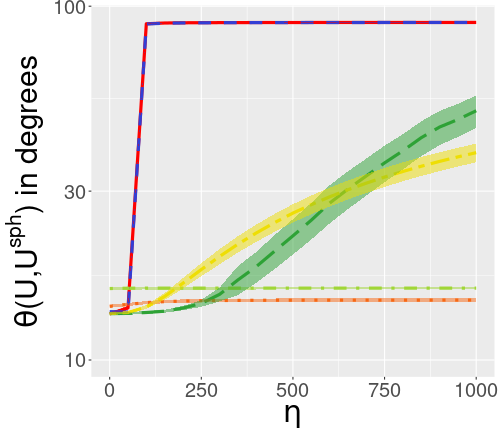}
         \caption{Singular Vector}
         \label{subfig:sim_v}
     \end{subfigure}
     \begin{subfigure}[b]{0.20\textwidth}
         \centering
         \includegraphics[width=\textwidth]{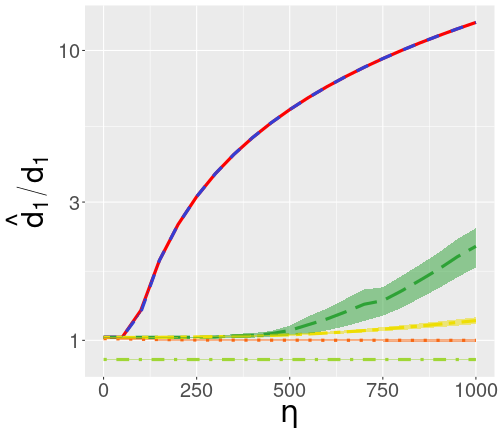}
         \caption{Singular Value}
         \label{subfig:sim_d}
     \end{subfigure}
    \begin{subfigure}[b]{0.20\textwidth}
         \centering
         \includegraphics[width=\textwidth]{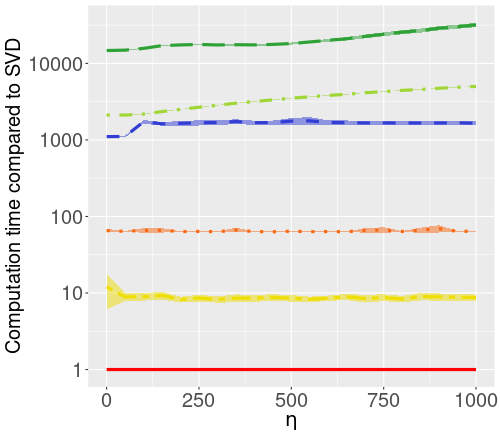}
         \caption{Time}
         \label{subfig:sim_t}
     \end{subfigure}
     \begin{subfigure}[b]{0.20\textwidth}
         \centering
         \includegraphics[width=\textwidth]{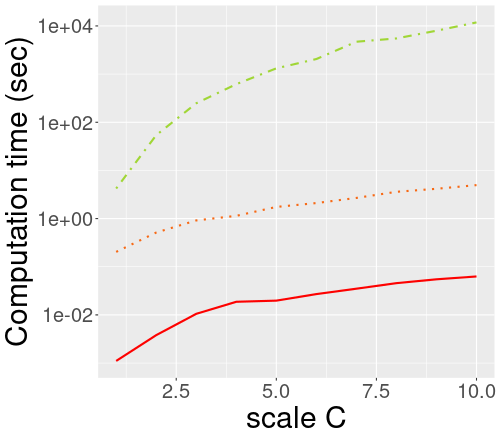}
         \caption{Scalability}
         \label{subfig:simnp_final}
    \end{subfigure}
    \vspace{.3in}
        \caption{The Approximation Accuracy,  Robustness, and Computation Times Against Increasing Magnitude $\eta$ of Outliers In (a), (b), And (c). Panel (d) shows the computation times over varying size of input matrix. The y-axes are in log-scale.} 
        \label{fig:sim}
\end{figure}

Figure~\ref{subfig:sim_v} illustrates the angle between 3-dimensional left singular subspace of $\Lv$ and the approximated  subspaces given by the six methods. We do not report the results for the right singular subspaces. Except that COP recovers well the right singular subspaces, the result is similar to that of the left singular subspace. Panel (b) shows the ratio of the approximated largest singular value of $\Xv^\eta$ to the largest singular value of $\Lv$. In terms of these values, we find that SVD, ELSVD, R2PCP, and COP do not exhibit robustness. 
While both SpSVD and RPCA appear to show desirable performances (with approximation errors for the singular subspace around $15^\circ$ across varying $\eta$), SpSVD performs consistently better for singular value approximation ($\widehat{d}_1^\sph / d_1 \approx 1$ and  $\widehat{d}_1^{\rm RPCA} / d_1 \approx 0.86$). On the other hand, SpSVD boasts remarkably faster computation times compared to robust SVD approaches; see panel (c). Specifically, R2PCP requires  approximately 500 times longer computation times on average for $\eta = 1000$.

To measure computational scalability of these algorithms, we increases the size of the matrix from $200 \times 100$ to $2000 \times 1000$, as shown in Figure~\ref{subfig:simnp_final}. For this comparison, we have only compared SpSVD with the standard SVD as a baseline method and RPCA, the best-performing method among the existing methods. While the computation time for SpSVD increase at a rate similar to that of SVD, the computation time for RPCA increases much faster. 
Overall, we find that SpSVD not only accurately approximates the SVD in the presense of massive outliers, but also is fast, requiring only 70 times longer computation times than the standard SVD, in contrast to 
the other methods which require more than 1000 times longer times for large-scale data. We have also experimented with a higher rank case; see Section~\ref{app:rank9} of the supplementary material.

\paragraph{Experiment on Gene Expression Data Matrix}   
Using the data consisting of gene expression levels of  $n=168$ patients with small invasive ductal carcinomas, obtained from a comparative genomic hybridization array \citep{gravier2010prognostic}, we consider a scenario in which a block of size (16, 16) is contaminated (perhaps by a physical contamination of the array).  See Section~\ref{app:real} of the supplementary material for a detailed description of data pre-processing, and the mechanism of contamination.

The rank-$2$ approximation $\Xv_2$ of the original, uncontaminated data $\Xv$ (via the standard SVD) is considered as a groud truth. 
We also obtain three rank-$2$ approximations of the contaminated data: by the standard SVD, denoted by $\widehat{\Xv}_2^{\rm{svd}}$ using the standard SVD, 
$\widehat{\Xv}_2^\sph$ and $\widehat{\Xv}_2^{\rm{rpca}}$ obtained by SpSVD and RPCA, respectively. 
As expected, the SVD approximation is heavily affected by the outliers. In contrast, $\widehat{\Xv}_2^\sph$ and $\widehat{\Xv}_2^{\rm{rpca}}$   provide accurate and highly robust approximations of $\Xv_2$. These are graphically depicted in Figure~\ref{fig:real} in Section~\ref{app:real} of the supplementary material.

To further compare the performances of SpSVD and RPCA, we repeat the above experiment for 100 times, each with different realizations of random contamination. The quality of approximation $\widehat{\Xv}_2^\sph$ is measured by the relative error, $r(\widehat{\Xv}_2^\sph) \coloneqq \|\Xv - \widehat{\Xv}_2^\sph\|_{\rm{F}}/\|\Xv - \Xv_2\|_{\rm{F}}$. The errors $r(\widehat{\Xv}_2^{\sph})$ and $r(\widehat{\Xv}_2^{\rm{rpca}})$ are found to be similar, with values of 1.02 and 1.01, on average, respectively, and are much smaller than  $r(\widehat{\Xv}_2^{\rm{svd}}) \approx 51.56$. On average, SpSVD takes only 0.13 seconds for rank-2 approximation, while RPCA takes 68.56 seconds for the same task, which is more than 500 times larger than SpSVD's computation times.  

\section{CONCLUSIONS AND FUTURE WORKS}
In this paper, we have presented the SpSVD algorithm that provides a highly scalable, accurate, and robust approximation of SVD. To demonstrate  robustness, we have extended the classical notion of breakdown point to incorporate breakdown of unit vectors and subspaces, with respect to row-wise, column-wise, and block-wise contamination of a data matrix. Using the novel notion of block-wise breakdown points, 
our theoretical analysis further validates that our approach produces robust singular vector approximations  in the presence of block-wise contamination of input matrix contamination, outperforming the classical SVD and ELSVD. Through numerical studies, we have  demonstrated not only the desirable accuracy and robustness of our approach, but also a much higher computation efficiency compared to existing methods. 
 
We point out that our theoretical analysis can be further improved. In particular, in Theorem~\ref{thm:bd_sph} we have only provided lower bounds of breakdown points, and these bounds are by no means optimal. Moreover, evaluating the lower bounds  for a given $\Xv$ seems very challenging, as computing $n_R$ involves $\sum_{k=1}^{n_R}\frac{n!}{(n-k)!k!}$ comparisons,   which becomes impractical for large-scale data with large values of $n$. A potential future direction of research is to explore alternative approaches that provide a tightened lower bound with improved computational efficiency.

Additionally, we have not theoretically examined the block-wise breakdown points for many other robust SVD methods, including RPCA of \citet{candes2011robust}, and R2PCP of \citet{brahma2017reinforced}. The breakdown of singular subspaces, described in terms of the maximal deviation from uncontaminated subspace, may be explored for  other robust SVD methods, but appears to be technically challenging. Finally, we have not addressed the issue of rank selection, which becomes even harder with contaminated data. We leave this as a future research topic. 

\subsubsection*{Acknowledgements}
This work was supported by Samsung Science and Technology Foundation under Project Number SSTF-BA2002-03.


\section{Appendix}
\subsection{TECHNICAL DETAILS}\label{app:proof}
\subsubsection{Statistical Accuracy Theorem}\label{subapp:consistency}
A statistical accuracy of $\vv_r^\sph$ can be measured asymptotically. For this purpose, assume that each $p$-vector $\xv_i$ is independently sampled from a mean-zero elliptical distribution \citep{cambanis1981theory}, $\mathcal{F}_{\Sigmav}$ with covariance matrix $\Sigmav = \Vv\Lambdav\Vv^T = \sum_{i=1}^p \lambda_i\vv_i\vv_i^T$ (satisfying $\lambda_i > \lambda_{i+1}$). Translating a result of \citet{locantore1999robust_discuss} on the robust PCA estimation of \cite{locantore1999robust}, we observe the following: 

\begin{theorem*}[\citet{locantore1999robust_discuss}]{\label{thm:consistency}}
  Let $\Xv_n = [\xv_1,\ldots,\xv_n]^T$ for increasing $n$, where $\xv_i$'s are independently sampled from $\mathcal{F}_{\Sigmav}$. If $\Lambdav = {\rm diag}(\lambda_1,\ldots,\lambda_p)$ consists of distinct non-negative diagonal elements,  then for $r =1,\ldots, p$, $\vv_r^\sph(\Xv_n)$ is a consistent estimator of the $r$th PC direction $\vv_r$ in the sense that 
   $\|\vv_r - \vv_r^\sph(\Xv_n) \|_2 \rightarrow 0$ almost surely as $n \rightarrow \infty$.
\end{theorem*}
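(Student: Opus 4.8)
The plan is to reduce the statement to a convergence statement about the \emph{spatial sign covariance matrix} and then apply a standard perturbation argument. First I would note that the right singular vectors of the row-normalized matrix $\widetilde{\Xv}_{\row}$ — which are exactly the candidates collected in $V^R$ — coincide with the eigenvectors of
\[
\widehat{\Sv}_n \;:=\; \tfrac1n\,\widetilde{\Xv}_{\row}^T\widetilde{\Xv}_{\row}\;=\;\tfrac1n\sum_{i=1}^n \uv_i\uv_i^T,\qquad \uv_i:=\xv_i/\|\xv_i\|_2\in S^{p-1},
\]
which is well defined a.s. Since every entry of $\uv_i\uv_i^T$ is bounded by $1$, the strong law of large numbers applied entrywise gives $\widehat{\Sv}_n\to\Sv:=\E[\uv_1\uv_1^T]$ almost surely.

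Second, I would identify the eigenstructure of $\Sv$ under ellipticity. Writing $\xv\sim\mathcal F_{\Sigmav}$ as $\xv\overset{d}{=}RA\uv_0$ with $AA^T=\Sigmav$, $\uv_0$ uniform on $S^{p-1}$, and $R\ge0$ independent of $\uv_0$, and using $\uv_0\overset{d}{=}\gv/\|\gv\|_2$ with $\gv\sim N(0,I_p)$, the radial factor cancels and $\uv_1\overset{d}{=}\yv/\|\yv\|_2$ with $\yv\sim N(0,\Sigmav)$. In the eigenbasis $\vv_1,\dots,\vv_p$ of $\Sigmav$ the coordinates of $\yv$ are independent, $y_j=\sqrt{\lambda_j}\,g_j$ with $g_j$ i.i.d. $N(0,1)$, so $\Sv$ is diagonal in that basis (off-diagonal terms vanish by the symmetry $g_j\mapsto-g_j$), with entries $\mu_j=\E\!\big[\lambda_j g_j^2/\textstyle\sum_k\lambda_k g_k^2\big]$. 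Hence $\Sv$ and $\Sigmav$ share the eigenvectors $\vv_1,\dots,\vv_p$. The key small lemma is that $\lambda_i>\lambda_j\Rightarrow\mu_i>\mu_j$: symmetrizing $\mu_i-\mu_j=\E\!\big[(\lambda_ig_i^2-\lambda_jg_j^2)/(\lambda_ig_i^2+\lambda_jg_j^2+W)\big]$ over the exchangeable pair $(g_i^2,g_j^2)$, where $W=\sum_{k\neq i,j}\lambda_kg_k^2$, reduces $\mu_i-\mu_j$ to the expectation of
\[
\frac{2(\lambda_i^2-\lambda_j^2)\,g_i^2g_j^2+(\lambda_i-\lambda_j)(g_i^2+g_j^2)\,W}{2(\lambda_ig_i^2+\lambda_jg_j^2+W)(\lambda_ig_j^2+\lambda_jg_i^2+W)},
\]
which is a.s. nonnegative and strictly positive with positive probability. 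Thus the eigenvalues $\mu_1>\cdots>\mu_p$ of $\Sv$ are simple.

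Third, since $\Sv$ is symmetric with simple eigenvalues, each unit eigenvector is — up to sign — a continuous function of the matrix near $\Sv$ (e.g.\ by the Davis--Kahan $\sin\Theta$ theorem with gap $\min_{s\neq r}|\mu_r-\mu_s|>0$). Combined with $\widehat{\Sv}_n\to\Sv$ a.s., this yields $\vv_r(\widetilde{\Xv}_{\row})\to\pm\vv_r$ a.s.; fixing the sign convention $(\vv_r^\sph)^T\vv_r\ge0$ gives $\|\vv_r-\vv_r^\sph(\Xv_n)\|_2\to0$.

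The step that needs extra care — and the main obstacle — is the discrete selection/deflation loop of Algorithm~\ref{alg:approx}: $\vv_r^\sph$ is picked from $V^R$ via the weighted-median problem \eqref{eq:spsvd}, so a priori $\vv_r^\sph=\vv_{\pi(r)}(\widetilde{\Xv}_{\row})$ for a data-dependent permutation $\pi$, and one must show $\pi$ is eventually the identity. I would pass to the scaled objective $\tfrac{1}{np}\|\Xv_r-d\uv\vv^T\|_{\rm{F}_1}$ and argue that, along the finitely many candidate pairs and over $d$ in a compact range, it concentrates around a limiting functional whose unique minimizer among correctly-deflated candidates is the natural pairing (using that the limits $\mu_1>\cdots>\mu_p$, hence the associated best weighted-median rank-one fits, are all distinct); continuity of $\argmin$ over a finite set then forces $\pi=\mathrm{id}$ for large $n$. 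Alternatively, reading the theorem — as in \citet{locantore1999robust_discuss} — as a statement about the spherical PCA directions $\vv_r(\widetilde{\Xv}_{\row})$ themselves, this last step is vacuous and Steps~1--3 complete the proof.
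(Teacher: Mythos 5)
Your proof follows essentially the same route as the paper's: reduce the claim to convergence of the spatial sign covariance matrix $\E[\uv_1\uv_1^T]$, show via elliptical symmetry that this matrix is diagonal in the eigenbasis of $\Sigmav$ with strictly decreasing diagonal entries (your symmetrization identity is the same exchangeability argument the paper invokes, just written out explicitly, and your reduction to the Gaussian case via the stochastic representation is equivalent to the paper's characteristic-function argument), and conclude by the strong law of large numbers plus continuity of eigenvectors at simple eigenvalues. The only place you go beyond the paper is in flagging the discrete selection/deflation step of Algorithm~\ref{alg:approx}: the paper's proof silently identifies $\vv_r^\sph$ with $\vv_r(\widetilde{\Xv}_{\row})$ and never addresses the possibility of a data-dependent permutation, so your final paragraph is a minor improvement in rigor rather than a gap in your argument.
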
 
Note that the conclusion of above theorem holds when $\vv_r^\sph(\Xv_n)$ is replaced by the right singular vector $\vv_r(\Xv_n)$, which may be used to justify the use of SVD in the estimation of PC directions. The theorem implies that $\vv_r^\sph(\Xv)$ is a statistically accurate approximation of the population PC direction. 

\begin{proof}
    It is known that, for independent and identically distributed (\rm{i.i.d.}) random variables with a finite covariance matrix, the sample covariance matrix almost surely converges to the population covariance matrix. Furthermore, when the eigenvalues of the population covariance matrix are distinct, the eigenvector corresponding to the $j$th largest eigenvalue of the sample covariance matrix converges to that of the population covariance matrix, almost surely.

    It is enough to show that $\Var(\frac{\xv_1}{\|\xv_1\|_2}) = \Vv\Tilde{\Lambdav}\Vv^T$  where $\Vv = (\vv_1,\dots,\vv_p)$ is a matrix consisting of PC directions, and $\Tilde{\Lambdav}=\diag(\Tilde{\lambda}_1,\dots,\Tilde{\lambda}_p)$ with $\Tilde{\lambda}_1 > \dots >\Tilde{\lambda}_p$.
   Denote $\Zv = \Vv^T \xv_1$. Note that the $\xv_1$ is from an elliptical distribution, so that characteristic function of $\xv_1$ is represented by $\phi_{\xv_1}(\tv) = \psi (\tv^T \Vv\Lambdav\Vv  \tv)$ for some scalar function $\psi: \Rb \rightarrow \Rb$, and the characteristic function of $\Zv$ is represented by $\phi_{\Zv}(\tv) = \psi (\tv^T \Lambdav  \tv)$. It implies that $pdf_{\Zv} (z_1,\dots,z_j,\dots,z_p) = pdf_{\Zv} (z_1,\dots,-z_j,\dots,z_p)$ for any $j=1,\dots,p$.
    Thus, we have
    \begin{align*}
        \E(\frac{Z_i}{\|\Zv\|_2}) =0 ,\quad \E(\frac{Z_i Z_j}{\|\Zv\|_2^2}) = 0.
    \end{align*}
    It implies that the off diagonal elements of $\Var(\frac{\Zv}{\|\Zv\|_2})$ are zero. On the other, for $j=1,\dots,p-1,$
    \begin{align*}
        \E(\frac{Z_j^2}{\|\Zv\|_2^2}) 
        &= \E(\frac{Z_j^2}{Z_j^2 + Z_{j+1}^2 + \sum_{i\neq j, j+1} Z_i^2}) \\
        &= \E(\frac{\lambda_jY_j^2}{\lambda_j Y_j^2 + \lambda_{j+1} Y_{j+1}^2 + \sum_{i\neq j, j+1} \lambda_i Y_i^2}) \\
        &> \E(\frac{\lambda_{j+1}Y_{j+1}^2}{\lambda_{j+1} Y_{j+1}^2 + \lambda_{j} Y_{j}^2 + \sum_{i\neq j, j+1} \lambda_i Y_i^2})
        = \E(\frac{Z_{j+1}^2}{\|\Zv\|_2^2}) 
    \end{align*}
    where $\Yv = (Y_1,\dots,Y_p)^T$ is a random vector which has a characteristic function $\phi_{\Yv}(\tv) = \psi (\tv^T\tv)$.
    Thus, the diagonal elements of $\Var(\frac{\Zv}{\|\Zv\|_2})$ are decreasing. Hence,
    \begin{align*}
        \Var(\frac{\xv_1}{\|\xv_1\|_2}) = \Vv \Var (\frac{\Zv}{\|\Zv\|_2})\Vv^T = \Vv \Tilde{\Lambdav}\Vv^T
    \end{align*}
    where $\Tilde{\Lambdav}=\diag(\Tilde{\lambda}_1,\dots,\Tilde{\lambda}_p)$ with $\Tilde{\lambda}_j = \E(\frac{Z_j^2}{\|\Zv\|_2^2})$.
\end{proof}

\subsubsection{Proof of Theorem~\ref{thm:finsamp}}
\begin{proof}
    Let $\Sigmav \sb{0} = \sum \sb{j=1}^p d \sb{j} \mathbf{v} \sb{j} \mathbf{v} \sb{j}^T$ be the covariance matrix of $\frac{\mathbf{x} \sb{1}}{\lVert \mathbf{x} \sb{1} \rVert \sb{2}}$ by Statistical Accuracy Theorem. Let $\widehat{\Sigmav}_\epsilon$ be the sample covariance made with $n$ normalized samples $[\frac{\xv_{1}}{\lVert \xv_{1}\rVert} , \dots, \frac{\xv_{(1-\epsilon)n}}{\lVert \xv_{(1-\epsilon)n}\rVert},\frac{\yv_{1}}{\lVert \yv_1\rVert} , \dots, \frac{\yv_{\epsilon n}}{\lVert\yv_{\epsilon n}\rVert}]$. By Davis-Kahan theorem and taking expectation the both sides, we have 
    $$
    \mathbb{E} \left[\min\{\lVert\hat{\mathbf{v}}_{j}^\sph - \mathbf{v} \sb{j} \rVert \sb{2} , \lVert\hat{\mathbf{v}}_{j}^\sph + \mathbf{v} \sb{j} \rVert \sb{2}\}\right] \leq \frac{2^{2/3}}{\delta_j} \mathbb{E} \left[\lVert \hat{\Sigmav} \sb{\epsilon} - \Sigmav \sb{0} \rVert\right]
    $$ with spectral norm $\lVert \cdot \rVert$. The right-hand side can be bounded as 
    \begin{align*}
        \mathbb{E} \left[ \lVert \hat{\Sigmav} \sb{\epsilon} - \Sigmav \sb{0} \rVert \right] \leq& \; \epsilon \mathbb{E}\left[\lVert \frac{1}{\epsilon n} \sum \sb{i=1}^{\epsilon n} \frac{\mathbf{y} \sb{i} \mathbf{y} \sb{i}^T}{\lVert \mathbf{y} \sb{i} \rVert \sb{2}^2} \rVert\right]\\
        &+ (1-\epsilon) \mathbb{E}\left[\lVert \frac{1}{(1-\epsilon) n} \sum \sb{i=1}^{(1-\epsilon ) n} \frac{\mathbf{x} \sb{i} \mathbf{x} \sb{i}^T}{\lVert \mathbf{x} \sb{i} \rVert \sb{2}^2} - \Sigmav \sb{0}  \rVert\right] .
    \end{align*}
      Using Theorem 4.7.1 (Covariance estimation in \cite{vershynin2018high}), We have
      $$\mathbb{E} \left[\lVert \hat{\Sigmav} \sb{\epsilon} - \Sigmav \sb{0} \rVert\right] \leq \epsilon + \left(1-\epsilon\right) c \left(\sqrt{\frac{p}{(1-\epsilon)n}} + \frac{p}{(1-\epsilon)n}\right),$$
      where $c$ is an absolute constant. Thus,
      \begin{align*}
          \mathbb{E} \left[\lVert\hat{\mathbf{v}}\sb{j}^\sph - \mathbf{v}\sb{j} \rVert\sb{2}\right] &\leq \frac{1}{\delta_j} \lbrace 2^{\frac{3}{2}} \epsilon + (1-\epsilon) C (\sqrt{\frac{p}{(1-\epsilon)n}} + \frac{p}{(1-\epsilon)n}) \rbrace\\
          &\leq \frac{1}{\delta_j}(C' \epsilon + C'' \sqrt{(1-\epsilon)\epsilon})
    \end{align*}
    for some absolute constants $C', C'' > 0$.
\end{proof}

\subsubsection{Proof of Lemma~\ref{lem:bd_relation}}
We provide a proof for the lemma, where Lemma~\ref{lem:bd_relation} is a direct consequence of the following lemma.
\begin{lem}\label{lem:bd_relation_app}\
    \begin{enumerate}[leftmargin = 5mm, label=(\roman*)]
        \item[(i)] $\bp_{\row}(\Vc;\Xv) =k$ if and only if $(k,l) \in \BP(\Vc;\Xv)$ for some $1\leq l \leq p$ and $(i,j) \not\in \BP(\Vc;\Xv)$ for any $(i,j) \preceq (k-1,p)$. 
        \item[(ii)] $\bp_{\col}(\Vc;\Xv) =l$ if and only if $(k,l) \in \BP(\Vc;\Xv)$ for some $1 \leq k \leq n$ and $(i,j) \not\in \BP(\Vc;\Xv)$ for any $(i,j) \preceq (n,l-1)$. 
    \end{enumerate}
\end{lem}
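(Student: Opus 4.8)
The argument rests on two elementary observations. First, \emph{a reduction}: replacing $l$ rows of $\Xv$ by arbitrary values is the very same operation as replacing all entries of some $l\times p$ block, so ``$\Vc$ breaks down by replacing $l$ rows'' is synonymous with ``$\Vc$ breaks down at block-size $(l,p)$'', and likewise column contamination corresponds to block-sizes $(n,l)$. Hence $\bp_{\row}(\Vc;\Xv)=\min\{k:\Vc \text{ breaks down at }(k,p)\}$ and $\bp_{\col}(\Vc;\Xv)=\min\{l:\Vc \text{ breaks down at }(n,l)\}$, with the convention that a ``$0$-row'' (or ``$0$-column'') contamination leaves $\Xv$ unchanged and so never causes breakdown. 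Second, \emph{monotonicity}: if $(k,l)\preceq(k',l')$ then every contamination confined to a $k\times l$ block is also confined to a suitable $k'\times l'$ block, so $\sup_{\Zv_{k'}^{l'}}\theta(\Vc(\Zv_{k'}^{l'}),\Vc(\Xv))\ge\sup_{\Zv_{k}^{l}}\theta(\Vc(\Zv_{k}^{l}),\Vc(\Xv))$; since $\pi/2$ is the largest possible angle, breakdown at $(k,l)$ forces breakdown at every $(k',l')\succeq(k,l)$, and dually ``no breakdown at $(k,l)$'' propagates to every $(k',l')\preceq(k,l)$.

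Next I would isolate the combinatorial core. Put $D^{\ast}:=\{(a,b)\in\Bc_{n,p}:\Vc \text{ breaks down at }(a,b)\}$, which is an up-set in $(\Bc_{n,p},\prec)$ by monotonicity. By monotonicity, clause (i) of Definition~\ref{def:bd_block} at $(i,j)$ is equivalent to ``$(i,j)\in D^{\ast}$'' and clause (ii) to ``neither $(i-1,j)$ nor $(i,j-1)$ lies in $D^{\ast}$'' (vacuously at the boundary); therefore $\BP(\Vc;\Xv)$ is exactly the set of $\prec$-minimal elements of $D^{\ast}$. I would then prove the key fact: \emph{every $(m,q)\in D^{\ast}$ dominates some element of $\BP(\Vc;\Xv)$}. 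Indeed, choose a $\prec$-minimal element $(i,j)$ of the finite nonempty set $\{(a,b)\preceq(m,q)\}\cap D^{\ast}$; anything strictly below $(i,j)$ is also $\preceq(m,q)$, so it cannot belong to $D^{\ast}$, which makes $(i,j)$ $\prec$-minimal in all of $D^{\ast}$, i.e.\ $(i,j)\in\BP(\Vc;\Xv)$ and $(i,j)\preceq(m,q)$.

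With these tools, Lemma~\ref{lem:bd_relation_app}(i) is short. For ``$\Rightarrow$'': if $\bp_{\row}(\Vc;\Xv)=k$ then $(k,p)\in D^{\ast}$ and $(k-1,p)\notin D^{\ast}$; the key fact yields $(k',l')\in\BP(\Vc;\Xv)$ with $(k',l')\preceq(k,p)$, and $k'\le k-1$ is impossible since it would place $(k',l')\in D^{\ast}$ below $(k-1,p)$; hence $k'=k$ and $(k,l')\in\BP(\Vc;\Xv)$, while any $(i,j)\in\BP(\Vc;\Xv)$ with $(i,j)\preceq(k-1,p)$ would force $(k-1,p)\in D^{\ast}$, a contradiction. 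For ``$\Leftarrow$'': if $(k,l)\in\BP(\Vc;\Xv)$ then $(k,l)\in D^{\ast}$, hence $(k,p)\in D^{\ast}$ by monotonicity; and $(k-1,p)\in D^{\ast}$ would, by the key fact, produce an element of $\BP(\Vc;\Xv)$ below $(k-1,p)$, contradicting the hypothesis; downward monotonicity then gives $(m,p)\notin D^{\ast}$ for all $m\le k-1$, so $\bp_{\row}(\Vc;\Xv)=k$. Part (ii) is the mirror image (repeat the argument with columns, or apply (i) to $\Xv^{T}$). Finally, Lemma~\ref{lem:bd_relation} is immediate: if $(k,1)\in\BP(\Vc;\Xv)$ then the right-hand side of Lemma~\ref{lem:bd_relation_app}(ii) holds with $l=1$ (the condition on $(i,j)\preceq(n,0)$ being vacuous), so $\bp_{\col}(\Vc;\Xv)=1$; symmetrically $(1,l)\in\BP(\Vc;\Xv)$ gives $\bp_{\row}(\Vc;\Xv)=1$.

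No individual step is deep; the main work is keeping the poset bookkeeping clean — in particular handling the boundary cases $k=1$, $l=1$ and the degenerate block-sizes $(0,p)$, $(n,0)$ uniformly — and recognizing the one structural point that drives everything: because breakdown is monotone, $\BP(\Vc;\Xv)$ is precisely the antichain of $\prec$-minimal breakdown block-sizes, and every breakdown block-size lies above one of them.
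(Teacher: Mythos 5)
Your proof is correct and follows essentially the same route as the paper's: identify row-wise (resp.\ column-wise) contamination with block-size $(k,p)$ (resp.\ $(n,l)$) contamination, use monotonicity of breakdown under $\preceq$, and use the fact that every breakdown block-size dominates an element of $\BP(\Vc;\Xv)$. The only difference is that you spell out explicitly (via the up-set $D^{\ast}$ and its minimal elements) the "key fact" that the paper asserts without proof in the line "thus there exists $(i,j)\in\BP(\Vc;\Xv)$ such that $(i,j)\preceq(k,p)$," which is a welcome addition rather than a divergence.
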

\begin{proof}
    Assume that $\bp_\row(\Vc ; \Xv) = k$. It implies that $\Vc$ breaks down at block-size $(k,p)$ at $\Xv$, thus there exists $(i,j) \in \BP (\Vc ; \Xv)$ such that $(i,j) \preceq (k,p)$. Since $\bp_\row(\Vc ; \Xv) > k-1$, $\Vc$ does not break down at block-size $(i,j)$ at $\Xv$ for any $ (i,j) \preceq (k-1,p)$. Thus, $(k,l) \in \BP(\Vc;\Xv)$ for some $1\leq l \leq p$ and $(i,j) \not\in \BP(\Vc;\Xv)$ for any $(i,j) \preceq (k-1,p)$.

    Assume that $(k,l) \in \BP(\Vc;\Xv)$ for some $1\leq l \leq p$ and $(i,j) \not\in \BP(\Vc;\Xv)$ for any $(i,j) \preceq (k-1,p)$. Then, we have $\bp_\row(\Vc ; \Xv) \leq k$ since $(k,l) \in \BP(\Vc;\Xv)$ for some $1\leq l \leq p$, and  $\bp_\row(\Vc ; \Xv) > k-1$ since $(i,j) \not\in \BP(\Vc;\Xv)$ for any $(i,j) \preceq (k-1,p)$.

    The proof of the second part of this lemma about the column-wise brakdown points, can be given by following the lines of the above, with $\Xv$ replaced by $\Xv^T$ and $\Vc$ replaced by $\Vc^T: \Rb^{p \times n} \rightarrow \Rb$ given by $\Vc^T(\Xv^T) = \Vc(\Xv)$. Lemma~\ref{lem:bd_relation} is proved by setting $k=1$ or $l=1$ in Lemma~\ref{lem:bd_relation_app}.
\end{proof}
\subsubsection{Proof of Proposition~\ref{thm:bd_svd}}
\begin{proof}
    The proof of Proposition~\ref{thm:bd_svd} can be given by following the lines of the proof for Theorem~\ref{thm:bd_rho}, with $\rho(\Zv) = \sum_{i,j}\rho_{ij}(z_{ij})= \sum_{i,j}z_{ij}^2$ without the regularizations.

\end{proof}
\subsubsection{Proof of Theorem~\ref{thm:bd_rho}}
\begin{proof}
    Let $({\sigma}_r^{\rho},{\uv}_r^{\rho},{\vv}_r^{\rho})$ be the $r$th solution to \eqref{eq:robSVD} for $\Xv = (x_{ij})\in \Rb^{n \times p}$. 
    Let $\Uc = \Uc_R^{\rho}(\Xv)$ and let $\Uc^{\perp}$ be the orthogonal complement of $\Uc_R^{\rho}(\Xv)$. Denote a unit vector in $\Rb^n$ with the $i$th element 1 by $\ev_i$. Since the dimension of $\Uc$ is $R<n$, there exist $\wv \coloneqq d_1\ev_{l_1} + \dots + d_{R+1}\ev_{l_{R+1}}$ with $d_1^2+\dots+d_{R+1}^2=1$ such that $\wv \in \Uc^\perp$. Assume that $l_i = i$, for notational simplicity and denote $\wv = (w_1 ,\dots,w_n)^T= (d_1,\dots,d_{R+1},0,\dots,0)^T$. Let $M$ be a constant satisfying $ \sum_{(i,j) \in I } \rho_{ij}(x_{ij}) + b < M$ where $I = \{ (i,j) : i>R+1 \text{ or } j > 1 \}$ and $b = \sup_{\uv \in S^{n-1}} \Pc_1(\uv) + \sup_{\vv \in S^{p-1}} \Pc_2(\vv)$. For arbitrary $0<\eps<1$, define $\Sc_{\eps} = \{\uv \in \Rb^n : |\uv^T\wv| > \|\uv\|_2(1-\eps^2/2)\}$ and $\Bc(c) = \{\uv = (u_1,\dots,u_n) \in \Rb^n : \sum_{i=1}^{R+1} \rho_{i1}(c d_i - u_i) + \sum_{i=R+2}^{n} \rho_{i1}(x_{i1} - u_i) < M\}$. For some $M' >0$, we can find a neighborhood of $c \wv$ with radius $M'$, given by $B(c\wv,M') \coloneqq \{c \wv + \uv \in \Rb^n : \|\uv\|_2 \leq M'\}$ satisfying $\Bc(c) \subset B(c\wv,M')$ since $\rho_{ij}$ diverges at the boundaries of $\Rb$. Here $M' >0$ is independent of $c>0$. By growing $c$, we can choose $c_0$ such that $\Bc(c_0) \subset B(c_0 \wv, M') \subset \Sc_\eps$.
    Let $\Zv = (z_{ij}) = (\zv^1,\dots,\zv^p) \in \Rb^{n \times p}$ be denoted by
    \begin{align*}
        \begin{cases}
            z_{ij} = x_{ij} & \text{ if }(i,j) \in I,\\
            z_{i1} = c_0 d_i & \text{ if }i=1,\dots,R+1.\\
        \end{cases} 
    \end{align*}
    Then, 
    {\small
    \begin{align*}
        &\min_{\av \in \Rb^p} \{\rho(\Zv - \wv \av^T) + \Pc_1(\wv) + \Pc_2 (\av/\|\av\|)\}\\
        &\leq \min_{a_1 \in \Rb} \sum_{i=1}^{R+1} \rho_{i1}(z_{i1} - d_i a_1) + \sum_{j=2}^p \min_{a_j \in \Rb} \sum_{i=1}^{R+1} \rho_{ij}(z_{ij} - d_i a_j) +\sum_{i=R+2}^{n}\sum_{j=1}^{p} \rho_{ij}(z_{ij}) + b\\
         &= \min_{a_1 \in \Rb} \sum_{i=1}^{R+1} \rho_{i1}(c_0 d_i - d_i a_1) + \sum_{j=2}^p \min_{a_j \in \Rb} \sum_{i=1}^{R+1} \rho_{ij}(x_{ij} - d_i a_j) +\sum_{i=R+2}^{n}\sum_{j=1}^{p} \rho_{ij}(x_{ij}) + b\\
         &\leq 0 + \sum_{(i,j)\in I} \rho_{ij}(x_{ij})+ b < M.
    \end{align*}
    }
    For any $\uv \not\in \Sc_\eps$ with $\|\uv\|_2 = 1$, $a \uv \not\in \Bc(c_0)$ for any $a \in \Rb$. Hence, 
    \begin{align*}
        &\min_{\av \in \Rb^p} \{ \rho(\Zv - \uv \av^T) + \Pc_1(\wv) + \Pc_2 (\av/\|\av\|)\}\\
        &\geq \min_{a_1 \in \Rb} \sum_{i=1}^{n} \rho_{i1}(z_{i1} - v_i a_1) \geq M.
    \end{align*}
    Thus, $\wv_1^\rho \in \Sc_\eps$ where $({\etav}_r^{\rho},{\wv}_r^{\rho},{\sv}_r^{\rho})$ is the $r$th solution to \eqref{eq:robSVD} for $\Zv$. Hence we have $|(\wv_1^{\rho})^T\uv_r^{\rho}| = |(\wv_1^{\rho} - \wv + \wv)^T\uv_r^{\rho}| \leq \sqrt{2-2(1-\eps^2/2)} = \eps$ for all $r$. Let $\Uv^{\rho} = (\uv_1^\rho,\dots,\uv_R^\rho) \in \Rb^{n \times R}$ be a matrix whose columns are basis vectors of $\Uc$, and $\Wv^\rho = (\wv_1^\rho,\dots,\wv_R^\rho) \in \Rb^{n \times R}$ be a matrix whose columns are basis vectors of $\Wc \coloneqq \Uc_R^\rho(\Zv)$. Let $\Av = \Wv^T\Vv \Vv^T \Wv$ and $\Tilde{\Av} = (\Tilde{a}_{ij})$ with the $(i,j)$th element $\Tilde{a}_{ij} = 0$ if $i=1$ or $j=1$, and $\Tilde{a}_{ij} = a_ij$ otherwise. By Weyl's Theorem,
    \begin{align*}
        \cos^2(\theta(\Uc,\Wc)) \leq \|\Av - \Tilde{\Av}\|_{\rm{F}} \leq 2R^2\eps.
    \end{align*}
    Since $\eps$ is arbitrary, $\sup_{\Zv_{R+1}^1} \theta(\Uc_R(\Xv),\Uc_R(\Zv_{R+1}^1)) =\frac{\pi}{2}$. Thus, $(k,1) \in \BP(\Uc_R^\rho ; \Xv)$ for some $k \leq R+1$. Moreover, it implies that, $\bp_{\col}(\Uc_R^{\rho};\Xv)=1$ and $\bp_{\row}(\Uc_R^{\rho};\Xv)\leq R+1$.

    The proof of the second part of this theorem can be given by following the lines of the above, with $\Xv$ replaced by $\Xv^T$ and $\Uc_R^\rho$ replaced by $\Uc_R^{\rho T}: \Rb^{p \times n} \rightarrow \Rb$ given by $\Uc_R^{\rho T}(\Xv^T) = \Vc_R^{\rho}(\Xv)$.
\end{proof}
\subsubsection{Proof of Theorem~\ref{thm:bd_sph}}
\begin{proof}
    We prove the second part of this theorem about the $R$th right singular space, $\Vc_R^{\sph}$. Assume that $\bp_\row(\Vc_R^{\sph};\Xv) = m < n_R$. For an arbitrary small $\eps$, take a corrupted data $\Zv_m = (\zv_1,\dots,\zv_n)$ such that $|I_0| = |\{i : \zv_i = \xv_i\}| = n-m$, and choose $\vv_{\perp} \in \Vc_R^{\sph}(\Zv_m)$ such that $\|\Piv_R^\perp \vv_{\perp}\|_2 < \eps$ where $\Piv_R^\perp$ is the projection matrix of $\Vc_R^{\sph}(\Xv)$. Let $\Piv_R = (\Iv - \Piv_R^\perp)$ be the projection matrix of the orthogonal complement of $\Vc_R^{\sph}(\Xv)$. Then,
    \begin{align*}
        \vv_{\perp}^T(\sum_{i =1}^n \frac{\zv_i \zv_i^T}{\zv_i^T\zv_i})\vv_{\perp} 
        &= \vv_{\perp}^T\big(\sum_{i \not\in I_0} \frac{\zv_i \zv_i^T}{\zv_i^T\zv_i}\big)\vv_{\perp} + \vv_{\perp}^T\big(\sum_{i \in I_0}\frac{\xv_i \xv_i^T}{\xv_i^T\xv_i}\big)\vv_{\perp}\\
        &\leq m + \vv_{\perp}^T\big(\sum_{i \in I_0} (\Piv_R + \Piv_R^\perp) \frac{\xv_i \xv_i^T}{\xv_i^T\xv_i} (\Piv_R + \Piv_R^\perp) \big)\vv_{\perp}\\
        &\leq m + \vv_{\perp}^T\big(\sum_{i \in I_0}\Piv_R \frac{\xv_i\xv_i^T}{\xv_i^T\xv_i}\Piv_R \big) \vv_{\perp} + 3(n-m)\eps\\
        &\leq m + \big(\lambda_1(\Tilde{\Xv}_m\Piv_R )\big)^2 + 3(n-m)\eps
    \end{align*}
    where $\Tilde{\Xv}_m \in \Rb^{(n-m)\times p}$ is the submatrix of $\Tilde{\Xv}_\row$ obtained by choosing $(n-m)$ rows indexed by $I_0$.
    On the other hand, 
        \begin{align*}
        \vv_{\perp}^T(\sum_{i =1}^n \frac{\zv_i \zv_i^T}{\zv_i^T\zv_i})\vv_{\perp} &\geq \sup_{\Vc \in \gr (p,R)}\min_{\vv \in \Vc}\{\vv^T(\sum_{i =1}^n \frac{\zv_i \zv_i^T}{\zv_i^T\zv_i})\vv\}\\
        &= \sup_{\Vc \in \gr (p,R)}\min_{\vv \in \Vc}\{\vv^T(\sum_{i \not\in I_0} \frac{\zv_i\zv_i^T}{\zv_i^T\zv_i})\vv + \vv^T(\sum_{i \in I_0}\frac{\xv_i \xv_i^T}{\xv_i^T\xv_i})\vv\}\\
        &\geq \big(\lambda_R(\Tilde{\Xv}_m)\big)^2.
    \end{align*}
    It implies that 
    \begin{align*}
        (1-3\eps)m \geq  \big(\lambda_R(\Tilde{\Xv}_m)\big)^2 - \big(\lambda_1(\Tilde{\Xv}_m\Piv_R )\big)^2 - 3n\eps,
    \end{align*}
    thus, 
    \begin{align*}
        (1-3\eps)m \geq \inf_{\Tilde{\Xv}_m}\{\big(\lambda_R(\Tilde{\Xv}_m)\big)^2 - \big(\lambda_1(\Tilde{\Xv}_m\Piv_R )\big)^2\} - 3n\eps.
    \end{align*}
    Since $\eps$ is arbitrary, 
    \begin{align*}
        m \geq \inf_{\Tilde{\Xv}_m}\{\big(\lambda_R(\Tilde{\Xv}_m)\big)^2 - \big(\lambda_1(\Tilde{\Xv}_m\Piv_R )\big)^2\}.
    \end{align*}
    It is a contradiction since $m <n_R$. Thus, $\bp_\row(\Vc_R^{\sph};\Xv) \geq n_R$. It implies that $(n_R,1) \preceq (i,j)$ for any $(i,j) \in \BP(\Vc_R^{\sph};\Xv)$.

    To show that $\bp_\col(\Vc_R^{\sph};\Xv) \leq R+1$, let $\Vc = \Vc_R^\sph(\Xv)$ and let $\Vc^\perp$ be the orthogonal complement of $\Vc_R^{\sph}(\Xv)$. Denote the unit vector in $\Rb^p$ with the $j$th element 1 by $\ev^j$. Since the dimension of $\Vc$ is $R<p$, there exist $\sv \coloneqq d_1\ev^{l_1} + \dots + d_{R+1}\ev^{l_{R+1}}$ with $d_1^2+\dots+d_{R+1}^2=1$ such that $\sv \in \Vc^\perp$. Assume that $l_j = j$, for notational simplicity and denote $\sv = (w_1 ,\dots,w_p)^T = (d_1,\dots,d_{R+1},0,\dots,0)^T$. For $\Xv = (\xv^1,\dots,\xv^p) \in \Rb^{n \times p}$ and $c > 0$, define a corrupted matrix constructed by replacing $R+1$ columns of $\Xv$, by $\Zv^{R+1}(c) = (\zv_1(c),\dots,\zv_n(c))^T = (\zv^1(c),\dots,\zv^{p}(c))$ with $\zv^j(c) = c d_j \1v_n $ for $j=1,\dots,R+1$, and $\zv^j(c) = \xv^j$ for $j=R+2,\dots,p$. Let $\Yv(c) = (\yv_1(c),\dots,\yv_n(c))^T \in \Rb^{n \times p}$ be a row-normalized matrix of $\Zv^{R+1}(c)$ whose $i$th row is $\yv_i(c) = \frac{\zv_i(c)}{\|\zv_i(c)\|_2}$. Denote $\Yv \coloneqq \lim_{c \rightarrow \infty} \Yv(c) = (\sv,\dots,\sv)^T \in \Rb^{n \times p}$. By Davis-Khan Theorem, 
    \begin{align*}
        \sin \Big(\theta \big(\sv,\vv_1(\Yv(c)) \big) \Big)  \leq \frac{2}{n} \|\Yv^T\Yv - \Yv(c)^T\Yv(c)\|_{\rm{F}} \rightarrow 0
    \end{align*}
    as $c \rightarrow \infty$ where $\vv_1(\Yv(c))$ is the right singular vector corresponding to the largest singular value of $\Yv(c)$. Note that the right singular space obtained from SpSVD, $\Vc_R^\sph(\Zv(c))$, equals the right singular space, $\Vc_R(\Yv(c))$. Thus
    \begin{align*}
        \sup_{\Zv^{R+1}} \theta(\Vc_R^\sph(\Xv),\Vc_R^\sph(\Zv^{R+1}))
        &\geq \lim_{c \rightarrow \infty} \theta(\Vc_R^\sph(\Xv),\Vc_R^\sph(\Zv^{R+1}(c)) \\
        &= \lim_{c \rightarrow \infty} \theta(\Vc_R^\sph(\Xv),\Vc_R(\Yv(c))) = \frac{\pi}{2}.
    \end{align*}
    It implies that $\bp_{\col}(\Vc_R^{\sph};\Xv) \leq R+1$.

    The proof of the first part of this theorem can be given by following the lines of the above, with $\Xv$ replaced by $\Xv^T$ and $\Vc_R^\sph$ replaced by $(\Vc_R^{\sph})^T: \Rb^{p \times n} \rightarrow \Rb$ given by $(\Vc_R^{\sph})^T(\Xv^T) = \Uc_R^{\sph}(\Xv)$.
\end{proof}
\subsection{COMPUTATION COMPLEXITY}
\subsubsection{Computation Complexity for Algorithm~\ref{alg:approx}}
For inputs, small rank $R$ and a data matrix $\mathbf{X} \in \mathbb{R}^{n \times p}$, Algorithm~\ref{alg:approx} consists of the following steps: normalization, rank-$R$ SVD, and finding the solution to \eqref{eq:spsvd}. Each row of $\mathbf{X}$, which is a $p$-vector, is normalized for $i=1,\dots,n$. This step requires computation time of $O(np)$. Similarly, column normalization is performed with the same complexity. We use a partial SVD algorithm to find the rank-$R$ approximation of the normalized data, which requires $O(npR)$ complexity. Finally, the solution to \eqref{eq:spsvd} is computed using a selection algorithm that finds the smallest value among the $npR^2$ elements. Each step for $r=1,\dots,R$ takes $npR^2$ computations, resulting in a total computation burden of $O(npR^3)$ for $k=1,\dots, R$.

\subsubsection{Computation Complexity of Other Methods}
In Section~\ref{sec:numeric}, we performed comparison of SpSVD with other methods. \citet{brahma2017reinforced} presents computational complexity of part of the R2PCP algorithm in their paper. Although they propose new batch version of algorithm to get reduced computational complexity, they did not reveal the total computational complexity of the entire process. On the other hand, \citet{candes2011robust} claims that the dominant cost in RPCA algorithm comes from computing one partial SVD per iteration. Although the number of iterations in their algorithm appears to remain nearly constant regardless of dimension, the algorithm in the \texttt{R} package \texttt{rpca} \citep{maciek_rpca} requires significantly more time to run compared to other methods.

\subsection{ADDITIONAL DETAILES IN NUMERICAL STUDIES}
In this numerical studies, we generate the data, and implement standard SVD and SpSVD using \texttt{R}. We used the \texttt{R} package \texttt{rpca} \citep{maciek_rpca} for RPCA, \texttt{Matlab} implementation for COP, and R2PCP \citep{brahma2017reinforced} was implemented using their source code in \texttt{Matlab}.

\subsubsection{Data Generation and Adding Contamination in the Simulation Experiment}
To generate randomly sampled matrices $\mathbf{U}$ and $\mathbf{V}$ from the uniform distribution on the set of orthogonal matrices, respectively, we create a matrix consisting of standard normal random noise and apply QR decomposition. It is well known that this orthogonal matrix obtained from QR decomposition follows the uniform distribution on the set of orthogonal matrices of the same size.

To generate the sparse outlier matrix $\Sv$, we arbitrary choose $0.05\%$ of the indices as $I=\{i_1,\dots,i_{10}\} \subset \{1,\dots,n\}$ and $J = \{j_1,\dots,j_5\} \subset \{1,\dots,p\}$, respectively. Note that $|I| \geq 4$ and $|J| \geq 4$. Let $\Uv_I \in \Rb^{|I| \times 3}$ and $\Vv_J \in \Rb^{|J| \times 3}$ be the submatrices of $\Uv \in \Rb^{n \times 3}$ and $\Vv \in \Rb^{p \times 3}$ consisting of rows indexed by $I$ and $J$, respectively. We can find solutions $\Tilde{\av} = (\Tilde{a}_1,\dots,\Tilde{a}_{10})' \in \Rb^{|I|}$ and $\Tilde{\bv} =(\Tilde{b}_1,\dots,\Tilde{b}_5)' \in \Rb^{|J|}$ satisfying $\Uv_I^T \Tilde{\av} = \0v_{3}$ and $\Vv_J^T \Tilde{\bv} = \0v_{3}$. We define the sparse outlier matrix by $\Sv = \av \bv^T /\|\av\|_2\|\bv\|_2$, where $\av = (a_1,\dots,a_n)' \in \Rb^n$ has elements zero excepts for elements indexed by $I$ as $a_{i_l} = \Tilde{a}_l$ for $l=1,\dots,10$, and $\bv = (b_1,\dots,b_p)' \in \Rb^p $ has elements zero excepts for elements indexed by $J$ as $b_{j_k} = \Tilde{b}_k$ for $k=1,\dots,5$. Then, the row space and column space of $\Sv$ are orthogonal to those spaces of $\Lv$, respectively.

\subsubsection{Extra Result for the Simulation Experiment}\label{app:ex_res}

Due to computational limitations, we compared SVD, SpSVD, and RPCA without iterations for increasing matrix sizes in Section~\ref{sec:numeric}. In addition to those comparisons, we also provide experiments with 100 iterations while increasing the matrix size from $200 \times 100$ to $600 \times 300$. For $C=1, 1.2, \dots, 2.8, 3$, with $n=200$ and $p=100$, we generate a data matrix $\Xv \in \Rb^{C \cdot n \times C \cdot p}$. To maintain the magnitude of singular values and outliers, we set $\eta$ as $500C$ and the singular values $(d_1,d_2,d_3)$ as $C \cdot (80,70,60)$. The block of $\mathbf{S}$ containing the outliers has sizes of $(0.05nC,0.05pC)$. The results are presented in Figure~\ref{fig:simnp}.

\begin{figure}[t]
     \centering
    \begin{subfigure}[b]{0.6\textwidth}
         \centering
         \includegraphics[width=\textwidth]{fig/legend.png}
    \end{subfigure}\\
    \begin{subfigure}[b]{0.4\textwidth}
         \centering
         \includegraphics[width=\textwidth]{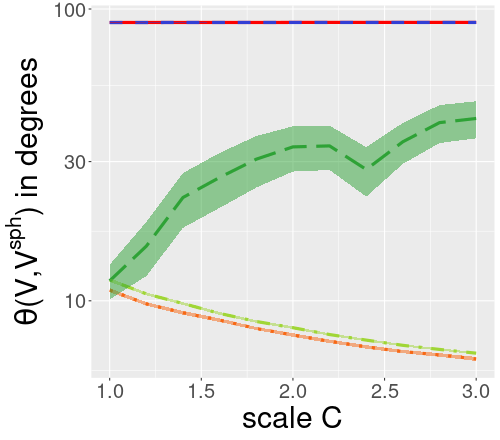}
         \caption{Right Singular Vector}
         \label{subfig:simnp_v}
     \end{subfigure}
     \begin{subfigure}[b]{0.4\textwidth}
         \centering
         \includegraphics[width=\textwidth]{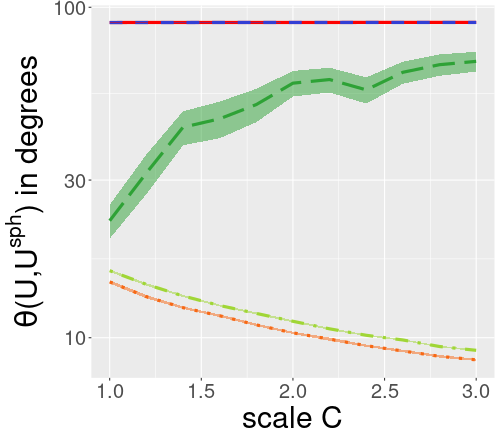}
         \caption{Left Singular Vector}
         \label{subfig:simnp_u}
     \end{subfigure}\\
    \begin{subfigure}[b]{0.4\textwidth}
         \centering
         \includegraphics[width=\textwidth]{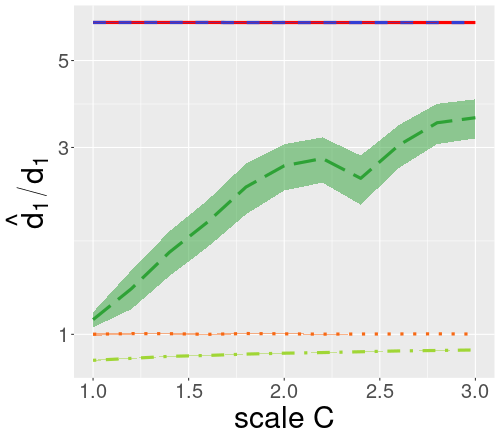}
         \caption{Singular Value}
         \label{subfig:simnp_d}
     \end{subfigure}
     \begin{subfigure}[b]{0.4\textwidth}
         \centering
         \includegraphics[width=\textwidth]{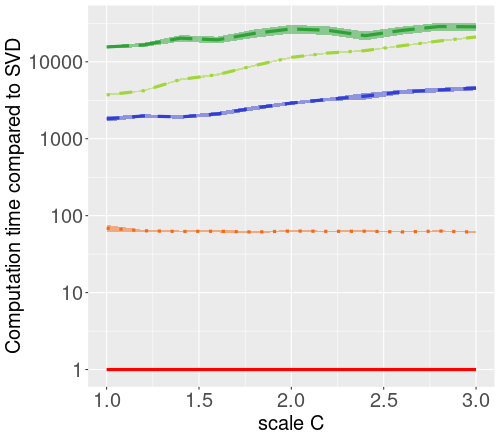}
         \caption{Computation Time}
         \label{subfig:simnp_t}
    \end{subfigure}
        \caption{The Approximation Accuracy,  Robustness, and Computation Times against Increasing Scale $C$ in (a), (b), (c) and (d).}
        \label{fig:simnp}
\end{figure}

As $C$ increases, RPCA and SpSVD exhibit better accuracy for the singular spaces and a singular value in Figure~\ref{subfig:simnp_v},\ref{subfig:simnp_u},and \ref{subfig:simnp_d}. RPCA takes longer time as $C$ increases compared to SVD, and other methods including ELSVD and R2PCP also require more time as $C$ increases compared to SVD as described in  Figure~\ref{subfig:simnp_t}.

\subsubsection{Higher Rank Case}\label{app:rank9}

The data matrix without outliers $\Xv \in \Rb^{n \times p}$ is set to be the sum of rank-9 $\Lv = (l_{ij})$ and $\Ev$ consisting of standard normal random noises with $n=1000$ and $p=500$. The rank-9 $\Lv$ is $\Lv= \sum_{r=1}^9 d_r \uv_r \vv_r^T$, $(d_1,d_2,\dots,d_8,d_9)=(750,700,\dots,400,350)$, and $\Uv = (\uv_1,\dots,\uv_9)$ and $\Vv= (\vv_1,\dots,\vv_9)$ are randomly sampled where the uniform distribution on the set of orthogonal matrices, respectively. We add a sparse outlier matrix $\Sv = (s_{ij})$ multiplied by $\eta=1000$, a scaling parameter, to the data matrix $\Xv$, i.e.,
\begin{align*}
\Xv = \Lv  + \Ev, \quad \Xv^{\eta} = \Lv  + \eta \Sv+ \Ev.
\end{align*}
The outlier matrix $\Sv$ has non-zero elements $(s_{ij})$ only in arbitrarily chosen $0.05n = 50$ rows indexed by $I_{50}$ and $0.05p = 25$ columns indexed by $J_{25}$. Thus, 
\begin{align*}
        \begin{cases}
            s_{ij} = l_{ij} & \text{ if }i \in I_{50} \text{ and } j \in J_{25},\\
            s_{ij} = 0 & \text{ otherwise }.\\
        \end{cases} 
\end{align*}
The simulation is repeated 10 times.

\begin{table}[h]
\caption{Result of The Additional Simulation Study with The Data Size of $2000$ by $1000$ and Rank 9} \label{tbl:rank9}
\begin{center}
\begin{tabular}{lrr}
\textbf{Method} &\textbf{Right angle} & \textbf{Left angle} \\
\hline \\
{SpSVD}     & 4.93 & 6.11  \\
{SVD}      & 83.47 & 81.35  \\
RPCA & 4.31  & 6.12  \\
ELSVD & 84.92 & 84.98  \\
{R2PCP}         & 77.92 & 85.49  \\
{COP}         & 4.19 & 76.33  \\
\textbf{Method} & \textbf{Ratio of singular value}  & \textbf{Time(sec)}\\
\hline \\
{SpSVD}      &0.99  & 17.39 \\
{SVD}       &61.42    &0.02 \\
RPCA  &0.96    & 2987.12 \\
ELSVD  &63.39    & 6454.69 \\
{R2PCP}          &11.01   & 890.98 \\
{COP}          &18.80    & 0.09 \\
\end{tabular}
\end{center}
\end{table}

The results are provided in Table~\ref{tbl:rank9}. Similarly to the findings in Section~\ref{sec:numeric}, SpSVD and RPCA demonstrate superior performance concerning the angles and the ratio, as indicated in the second, third, and fourth columns. 
COP, similar to our proposal (as it normalizes each vector), exhibits robustness in subspace recovery (right singular subspace). However, it's important to note that the algorithm in \cite{rahmani2017coherence} is robust PCA algorithms, not robust SVD algorithms. While they succeed in recovering the right singular vectors (the basis vectors of PC subspaces), they do not provide both left and right singular vectors along with their singular values simultaneously.

\subsubsection{Real Data}\label{app:real}
The data $\mathbf{X}$ is obtained from the raw data using the following steps. First, each variable in the raw data is transformed using the logarithm base 2 transformation. Then, we apply one-way ANOVA to the centered and normalized raw data with respect to the labels of patients, which have two categories: "good" and "poor", representing the condition of the patients. We select 500 variables from the entire gene expressions based on the p-values obtained from the one-way ANOVA. Subsequently, we obtain the data matrix $\mathbf{X}$ containing these 500 variables and scale it to have zero mean and unit variance.

For the data matrix $\Xv = (x_{ij})$, we arbitrarily choose approximately $0.1n \approx 16$ rows indexed by $I_{16}$ and $0.1n \approx 16$ columns indexed by $J_{16}$. We assign outlyingness to the data matrix by amplifying the $16^2$ elements corresponding to these rows and columns 1000 times. Thus, the data matrix with contamination, denoted as $\Xv^\eta = (x_{ij}^\eta)$, is given by
\begin{align*}
        \begin{cases}
            x_{ij}^\eta = 1000 x_{ij} & \text{ if }i \in I_{16} \text{ and } j \in J_{16},\\
            x_{ij}^\eta = x_{ij} & \text{ otherwise }.\\
        \end{cases} 
\end{align*}

\begin{figure}[t]
\vspace{.3in}
     \centering
     \begin{subfigure}[b]{0.45\textwidth}
         \centering
         \includegraphics[width=\textwidth]{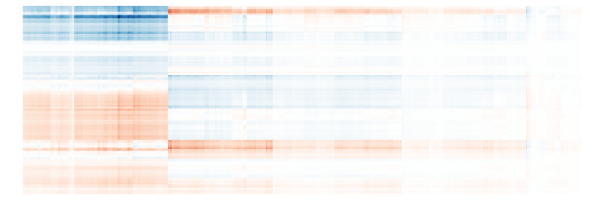}
         \caption{Ground Truth.}
         \label{subfig:real_tr}
     \end{subfigure}
     \hfill
     \begin{subfigure}[b]{0.45\textwidth}
         \centering
         \includegraphics[width=\textwidth]{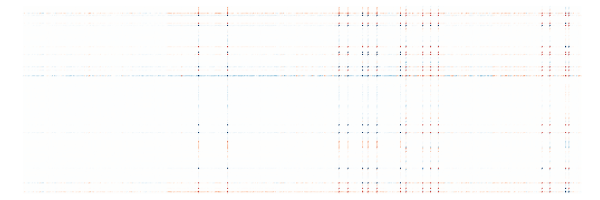}
         \caption{$\hat{\Xv}_2^{\rm{svd}}, r(\hat{\mathbf{X}}_{2}^{\rm{svd}}) =  51.56 $}
         \label{subfig:real_svd}
     \end{subfigure}
     \hfill \\
     \begin{subfigure}[b]{0.45\textwidth}
         \centering
         \includegraphics[width=\textwidth]{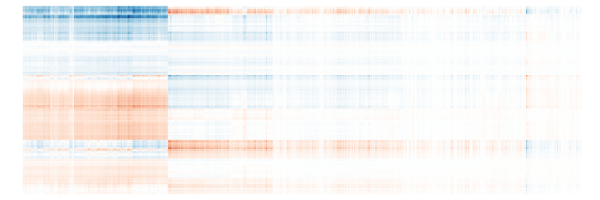}
         \caption{$\hat{\Xv}_2^{\sph}, r(\hat{\mathbf{X}}_{2}^{\sph}) = 1.02$ Time: 0.13 sec}
         \label{subfig:real_sph}
     \end{subfigure}
     \hfill 
     \begin{subfigure}[b]{0.45\textwidth}
         \centering
         \includegraphics[width=\textwidth]{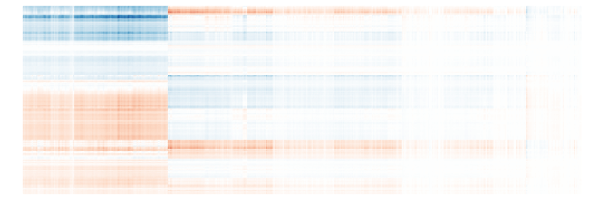}
         \caption{$\hat{\Xv}_2^{\rm{rpca}}, r(\hat{\mathbf{X}}_{2}^{\sph}) = 1.01$ Time: 68.56 sec}
         \label{subfig:real_rpca}
     \end{subfigure}
     \hfill\\
     \vspace{.3in}
        \caption{Rank-2 Approximations by SVD, SpSVD and RPCA Using Contaminated Data, Compared with the Ground Truth. }
        \label{fig:real}
\end{figure}

\subsubsection{Lower Bounds of Breakdown Points}\label{app:lower_bound}
In Section~\ref{sec:blockwise}, we presented the lower bounds $n_R$ and $p_R$ for the breakdown points of the right and left singular spaces obtained from SpSVD, respectively. To determine whether $n_R$ is equal to $k$ for $k=1,\dots,n$, we need to check the inequality in \eqref{eq:nr} for all possible submatrices of a row-normalized matrix. However, as the number of rows $n$ increases, this process becomes increasingly challenging, even for small values of $k$.

We evaluate the lower bounds of the breakdown points using small-sized data. We construct a data matrix $\Xv = \Lv + \Ev \in \Rb^{n \times p}$ with $n=200 \times 0.3 = 60$ and $p=100 \times 0.3 = 30$, where $\Lv$ is a rank-3 matrix and $\Ev$ is a matrix consisting of standard normal random noise. The rank-3 matrix $\Lv$ has singular values $(d_1, d_2, d_3)=(80,70,60)\times t$, with $t=0.3,0.45,0.6$, and the corresponding singular vectors are arbitrarily chosen.

Table~\ref{tab:lower_bound} illustrates the values of the lower bounds $n_R(\Xv)$ and $p_R(\Xv)$ for each $t=0.3, 0.45, 0.6$ and $R=3$. When larger singular values are assigned, it implies a larger gap between the two successive singular values of order $R$ and $R+1$. We observed that the lower bounds $n_R$ and $p_R$ tend to increase as the singular values increase. The lower part of Table~\ref{tab:lower_bound} also presents similar results for the case where $R=1$ and the rank-1 matrix $\Lv$ has a singular value of $60 \times t$ with $t=0.3, 0.45, 0.6$. Based on the observation where $n = 200 \times 0.3$, $p= 100 \times 0.3$, and the singular values $(80, 70, 60) \times 0.3$, depicted in the first line of Table~\ref{tab:lower_bound}, we roughly choose the size of the outlier block as $(0.05n, 0.05p)$ in the simulation experiment of Section~\ref{sec:numeric}.

\begin{table}[ht]
\caption{The Breakdown Points of the Right and Left Singular Spaces Obtained from SpSVD with Various Singular Values and $R$. Due to the computational burden, we restricted our computations to $n_R$ and $p_R$ up to 6. In cases where the lower bound exceeds or equals 7, we denote it as 7$\uparrow$, indicating that an exact value cannot be computed.}
    \label{tab:lower_bound}
    \centering
    \begin{tabular}{ p{3cm} p{2cm} p{2cm} p{2cm}   }
     \hline
     \multicolumn{4}{c}{The lower bound of breakdown points} \\
     \hline
     Singular values& $R$ &$n_R$ &$p_R$\\
     \hline
     $(80,70,60)\times 0.3 $  & 3 & 3             & 2\\
     $(80,70,60)\times 0.45$  & 3 & 5             & 2\\
     $(80,70,60)\times 0.6$   & 3 & 6             & 3\\
     \hline
     $(60)\times 0.3$         & 1 & 6             & 2\\
     $(60)\times 0.45$        & 1 & 7$\uparrow$   & 4\\
     $(60)\times 0.6$         & 1 & 7$\uparrow$   & 5\\
    \hline
    \end{tabular}
\end{table}
{
\subsection{Breakdown points in existing methods}\label{app:bdpoint_ex}

A reviewer suggested to investigate the breakdown points of the existing methods used in the empirical study. The singular vectors from the iterative algorithms of RPCA and R2PCP do not have closed-forms, and it is inherently very challenging to theoretically grasp the breakdown point for these methods. We were not able to find a right technical tool for such purpose. Note that the authors of RPCA and R2PCP did not investigate the breakdown for singular vectors. 

It turns out that COP has breakdown points upper-bounded by small numbers. (This is in contrast to our proposal, for which breakdown points are lower-bounded; recall that the higher breakdown point, the more robust a method is.) The COP algorithm, we used in Section~\ref{sec:numeric}, has a tuning parameter $m$, and consists of two steps: Screening out $m$ potential outliers, then applying the vanilla SVD. Below we provide details for rank $R=1$ SVD approximation.

Let COP$_m$ denote the COP algorithm (that removes $m$ outliers). For data matrix $\mathbf{X} \in \mathbb{R}^{n \times p}$, 
let $\mathcal{V}_1^{m}: \mathbb{R}^{n \times p} \rightarrow {\rm{Gr}(1,p)}$ be given by $\mathcal{V}_1^{m}(\mathbf{X})$, which is the one-dimensional subspace spanned by the (first) right singular vector 
of $\mathbf{X}$ obtained by COP$_m$, and $\mathcal{U}_1^{m}$ be the left singular vector, obtained by an application of COP$_m$. 
In aspects of our breakdown notions, the breakdown points of $\Vc_1^m$ and $\Uc_1^m$ for any $\Xv \in \Rb^{n \times p}$ are given as
\begin{align*}
    \bp_{\row}(\Vc_1^m;\Xv) &\leq m+1,\bp_{\row}(\Uc_1^m;\Xv) \leq 2,\\
    \bp_{\col}(\Vc_1^m;\Xv) &\leq 2,\bp_{\col}(\Uc_1^m;\Xv) = 1,\\
    (k+1,2) &\succeq (i,j) \text{ for some }(i,j) \in \BP(\Vc_1^m;\Xv),\\
    (2,1) &\succeq (i,j) \text{ for some }(i,j) \in \BP(\Uc_1^m;\Xv).
\end{align*}
Here, $(2,1) \succeq (i,j)$ for some $(i,j) \in \BP(\Uc_1^m(\Xv))$ implies that block-wise breakdown occurs by changing two elements in one column of $\Xv$. 
Our numerical experiments reflect these theoretical findings: COP$_m$ was shown to be robust and accurate in recovering the right singular subspace ($\Vc_1^m$), but failed to recover the left singular subspace ($\Uc_1^m$). Note that in these studies we have set $m$ to be the true number of outliers.

}

\end{document}